\documentclass{article}

\usepackage{microtype}
\usepackage{graphicx}
\usepackage{subcaption}
\usepackage{booktabs}
\usepackage{wrapfig}
\usepackage{algorithm}
\usepackage{algorithmic}
\usepackage{multirow}
\usepackage[table]{xcolor}
\usepackage{subcaption}
\usepackage{xcolor}
\usepackage{tcolorbox}
\newcommand{\blue}{\color{blue}}

\usepackage{hyperref}
\newcommand{\KEEP}{\textnormal{\texttt{KEEP}}}
\newcommand{\SWITCH}{\textnormal{\texttt{SWITCH}}}
\newcommand{\switch}{\textless switch\textgreater}
\newcommand{\switchr}{\textless /switch\textgreater}
\newcommand{\subgoal}{\textless subgoal\textgreater}
\newcommand{\subgoalr}{\textless /subgoal\textgreater}
\newcommand{\action}{\textless action\textgreater}
\newcommand{\actionr}{\textless /action\textgreater}

\newcommand{\ghlogo}{\raisebox{-0.2\height}{\includegraphics[height=1.3em]{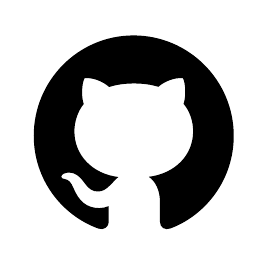}}}
\usepackage[accepted]{icml2026}

\usepackage{amsmath}
\usepackage{amssymb}
\usepackage{mathtools}
\usepackage{amsthm}
\usepackage{fontawesome5}

\usepackage[capitalize,noabbrev]{cleveref}

\theoremstyle{plain}
\newtheorem{theorem}{Theorem}[section]

\theoremstyle{definition}

\theoremstyle{remark}

\usepackage[textsize=tiny]{todonotes}

\begin{document}

\onecolumn
 % \icmltitle{HiPER: Hierarchical Plan–Execute RL for Multi-Turn LLM Agents}
 \icmltitle{HiPER: Hierarchical Reinforcement Learning with Explicit Credit Assignment for Large Language Model Agents}

  % It is OKAY to include author information, even for blind submissions: the
  % style file will automatically remove it for you unless you've provided
  % the [accepted] option to the icml2026 package.

  % List of affiliations: The first argument should be a (short) identifier you
  % will use later to specify author affiliations Academic affiliations
  % should list Department, University, City, Region, Country Industry
  % affiliations should list Company, City, Region, Country

  % You can specify symbols, otherwise they are numbered in order. Ideally, you
  % should not use this facility. Affiliations will be numbered in order of
  % appearance and this is the preferred way.
  \icmlsetsymbol{equal}{*}

  \begin{icmlauthorlist}
    \icmlauthor{Jiangweizhi Peng}{xxx}
    \icmlauthor{Yuanxin Liu}{yyy}
    \icmlauthor{Ruida Zhou}{comp}
    \icmlauthor{Charles Fleming}{ccc}
    \icmlauthor{Zhaoran Wang}{yyy}
    \icmlauthor{Alfredo Garcia}{zzz}
    \icmlauthor{Mingyi Hong}{xxx}
    \\
    \vspace{0.5em}
    \href{https://jonp07.github.io/HiPER-agent}{{\color{black}\faGlobe}~\,Project Page} \quad
    \href{https://github.com/JonP07/HiPER-agent}{{\color{black}\ghlogo}~Code}
    %\icmlauthor{}{sch}
    % \icmlauthor{Firstname8 Lastname8}{sch}
    % \icmlauthor{Firstname8 Lastname8}{yyy,comp}
    %\icmlauthor{}{sch}
    %\icmlauthor{}{sch}
  \end{icmlauthorlist}

  \icmlaffiliation{xxx}{University of Minnesota}
  \icmlaffiliation{yyy}{Northwestern University}
  \icmlaffiliation{comp}{Amazon AGI}
  \icmlaffiliation{zzz}{Texas A\&M University}
  \icmlaffiliation{ccc}{Cisco Research}

  \icmlcorrespondingauthor{Mingyi Hong}{mhong@umn.edu}
  % \icmlcorrespondingauthor{Firstname2 Lastname2}{first2.last2@www.uk}

  % You may provide any keywords that you find helpful for describing your
  % paper; these are used to populate the "keywords" metadata in the PDF but
  % will not be shown in the document
  \icmlkeywords{Machine Learning, ICML}

  \vskip 0.3in

% this must go after the closing bracket ] following \twocolumn[ ...

% This command actually creates the footnote in the first column listing the
% affiliations and the copyright notice. The command takes one argument, which
% is text to display at the start of the footnote. The \icmlEqualContribution
% command is standard text for equal contribution. Remove it (just {}) if you
% do not need this facility.

% Use ONE of the following lines. DO NOT remove the command.
% If you have no special notice, KEEP empty braces:
\printAffiliationsAndNotice{}  % no special notice (required even if empty)
% Or, if applicable, use the standard equal contribution text:
% \printAffiliationsAndNotice{\icmlEqualContribution}

\begin{abstract}
Training LLMs as interactive agents for multi-turn decision-making remains challenging, particularly in long-horizon tasks with sparse and delayed rewards, where agents must execute extended sequences of actions before receiving meaningful feedback. Most existing reinforcement learning (RL) approaches model LLM agents as flat policies operating at a single time scale, selecting one action at each turn. In sparse-reward settings, such flat policies must propagate credit across the entire trajectory without explicit temporal abstraction, which often leads to unstable optimization and inefficient credit assignment.

We propose \textit{HiPER}, a novel \underline{Hi}erarchical \underline{P}lan–\underline{E}xecute \underline{R}L framework that explicitly separates high-level planning from low-level execution. HiPER factorizes the policy into a high-level planner that proposes subgoals and a low-level executor that carries them out over multiple action steps. To align optimization with this structure, we introduce a key technique called hierarchical advantage estimation (HAE), which carefully assigns credit at both the planning and execution levels. By aggregating returns over the execution of each subgoal and coordinating updates across the two levels, HAE provides an unbiased gradient estimator and provably reduces variance compared to flat generalized advantage estimation.

Empirically, HiPER achieves state-of-the-art performance on challenging interactive benchmarks, reaching 97.4\% success on ALFWorld and 83.3\% on WebShop with Qwen2.5-7B-Instruct (+6.6\% and +8.3\% over the best prior method), with especially large gains on long-horizon tasks requiring multiple dependent subtasks. These results highlight the importance of explicit hierarchical decomposition for scalable RL training of multi-turn LLM agents.
\end{abstract}

\section{Introduction}
\label{sec:intro}

\begin{figure}[h]
    \centering
    \includegraphics[width=\linewidth]{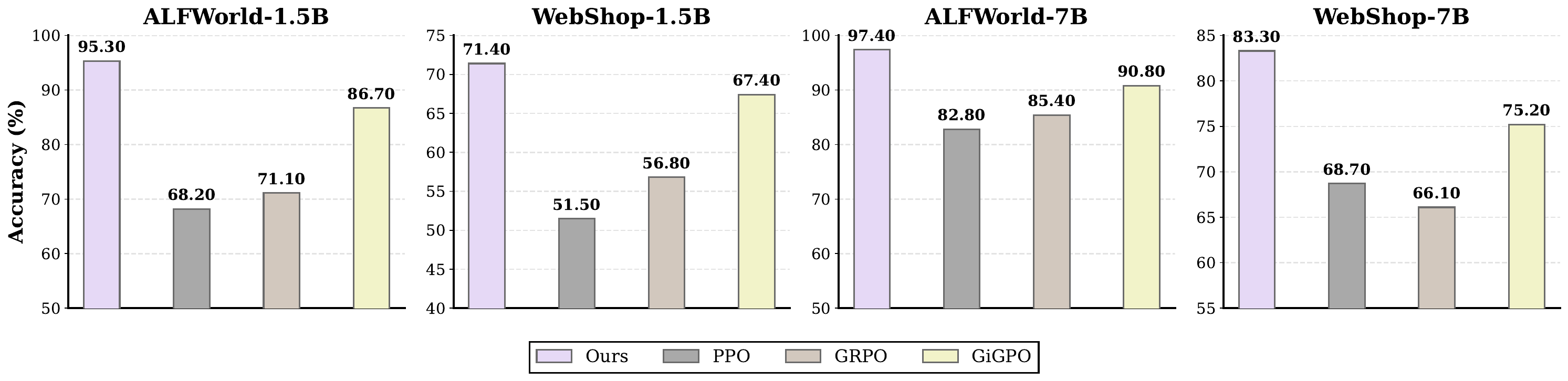}
    \caption{{\bf Overall performance on agentic benchmarks ALFWorld and WebShop}, with Qwen2.5-1.5B-Instruct and Qwen2.5-7B-Instruct as base models. Our method consistently outperforms all evaluated baseline methods, including the best known prior method GiGPO \cite{feng2025group}, across both benchmarks and model sizes.}
    \label{fig:header}
\end{figure}
\noindent {\bf Motivation.}  Large language models (LLMs) are increasingly deployed as agents that must complete tasks through multi-turn interactions with an environment, where effective planning and decision-making are essential for success. Reinforcement learning (RL) has emerged as a dominant paradigm for improving these agentic capabilities. Most existing RL methods model LLM agents as flat policies operating at a single time scale, selecting an action at each turn based on the current observation and interaction history \citep{schulman2017proximal, shao2024deepseekmath, wang2025ragen,feng2025group,liu2025agentic}. While such approaches have led to substantial improvements over pretrained models, a notable performance gap persists on long-horizon tasks with sparse rewards, where agents must execute extended action sequences, which may include over tens of thousands of tokens, before receiving meaningful feedback. In these settings, ``flat" RL methods must infer long-range dependencies solely from distant end-of-trajectory signals, often resulting in inefficient credit assignment and unstable behavior \cite{sutton1999between, bacon2017option, nachum2018data, klissarov2025discovering}.

\begin{figure*}
    \centering
    \includegraphics[width=\textwidth]{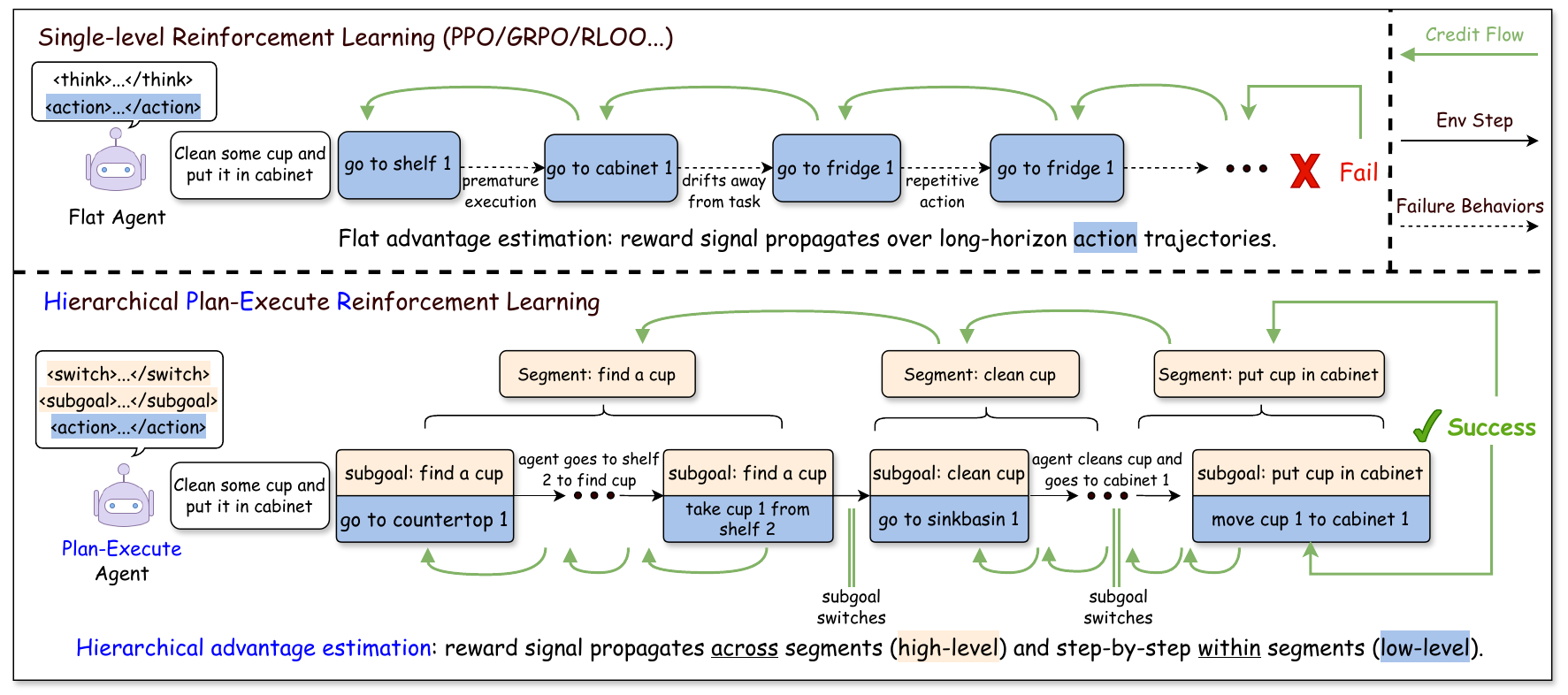}
    \caption{{\bf Overview of the HiPER framework.} The upper panel illustrates standard flat RL for LLM agents, where a single policy operates at one time scale and chooses an action at every turn, often leading to brittle long-horizon behavior. For instance, the agent may prematurely head to the cabinet before picking up, and cleaning the cup. 
    The lower panel presents our HiPER framework, built on two components: the Plan-Execute interface (Sec.~\ref{sec:interface}), a structured agent interface that explicitly separates high-level \emph{planning} and low-level \emph{execution}; and we propose the hierarchical advantage estimation (Sec.~\ref{sec:hae}), which aligns credit assignment with this two-level structure by propagating learning signals both \emph{within} and \emph{across} subgoal segments.
}
    \label{fig:overview}
% \vspace{-0.3cm}
\end{figure*}

To better understand this limitation, we inspect successful trajectories of trained flat LLM agents and observe a consistent pattern that has also been widely noted in recent works on agentic LLMs and long-horizon decision making \cite{wangvoyager,huang2022language,ahn2022can}: long action sequences implicitly organize into segments, each corresponding to an intermediate  subgoal that persists over multiple turns. For example, Fig. \ref{fig:overview} shows the task ``clean some cup and put it in cabinet”, which naturally decomposes into locating the cup, cleaning it, and placing it in the cabinet. Such segmentations occur across tasks, with coherent stretches of actions separated by sparse transition points where the agent switches subgoals. This suggests an {\it implicit hierarchical structure} underlying long-horizon agentic tasks and effective agent behavior, where {\it temporally extended} subgoals organize low-level actions over multiple turns. While flat RL agents, which typically operate under the ReAct~\cite{yao2022react} template, may exhibit planning-like behavior within their per-step reasoning,  they neither explicitly represent nor optimize for this structure. As a result, any subgoal organization remains {\it implicit} in the trajectories, often yielding fragile long-horizon behavior, e.g., abandoning unfinished stages or repeatedly taking ineffective actions, as illustrated in Fig.~\ref{fig:overview}. 

\noindent {\bf The HiPER Framework.} Motivated by these observations, we propose \textit{\underline{Hi}erarchical \underline{P}lan-\underline{E}xecute \underline{R}einforcement learning (HiPER)}, a {\bf hierarchical RL}
framework for training LLM agents on long-horizon, sparse-reward tasks. HiPER makes the implicit hierarchical structure in agent behavior explicit by separating slow, high-level planning from fast, low-level action execution, allowing the agent to commit to temporally extended subgoals, decide when to switch between them, and condition action generation on the current subgoal. 
More specifically, HiPER factorizes the policy into a high-level planner that proposes subgoals and a low-level executor that carries them out over multiple action steps. 
This structured decomposition introduces intermediate decision points that improve long-horizon coordination and enable more effective credit assignment under sparse feedback. 
See Fig. \ref{fig:overview} for an overview of HiPER.

The proposed HiPER relies on a {\it Plan–Execute interface}, a structured system prompt template that 
makes the agent’s hierarchical decisions explicit by separating high-level subgoal planning from low-level action execution within a {\it single} auto-regressive LLM policy. At each turn, the agent emits a structured output consisting of (i) a binary switching decision indicating whether to retain or update the current subgoal (i.e., \SWITCH\  or \KEEP,  enclosed in a  \texttt{<switch>} block), (ii) the current subgoal description (enclosed in a \texttt{<subgoal>} block), and (iii) a primitive environment action (enclosed in the \texttt{<action>} block). 
In this way, the flat agent-environment interaction trajectory is transformed into explicit, learnable planning and execution decisions at two levels. It is important to note that, both the plan and execution in the HiPER framework are \emph{dynamic}. The plans (subgoal generation, subgoal switching) are decided on the fly as the state observations evolve, rather than proposing a subgoal sequence upfront and following the fixed plan throughout. Likewise, the execution is conditioned on the current subgoal, but still determined by the agent step by step, rather than playing fixed sequences of actions pre-specified for each subgoal. This design allows both levels of decisions to be \emph{jointly optimized}, so the agent's abilities of global planning and subtask completion can be simultaneously improved. 

Learning under the Plan–Execute framework is non-trivial due to the strong correlation between decisions at different time scales. High-level subgoals shape which low-level actions should be executed over subsequent turns, %\rz{What does sensible mean here?}, 
while the quality of low-level execution determines whether a proposed subgoal is effective. Moreover, a subgoal decision may maintain across multiple turns (depending the quality of the actions and environment feedback) whereas low-level actions are selected at every turn. This {\it mismatch} in temporal granularity creates challenges for credit assignment and optimization. To address this, we develop a novel two-time-scale advantage estimation scheme, termed Hierarchical Advantage Estimation (HAE), which provides coupled learning signals for subgoal selection, subgoal switching, and action execution. Intuitively, HAE assigns credit to high-level decisions based on the aggregated outcome of the corresponding subgoal segment, while low-level advantages measure per-step improvement under the current subgoal. This alignment between the learning signal and the hierarchical decision structure enables stable and efficient joint optimization of planning and execution policies.

We provide theoretical justification for the proposed Hierarchical Advantage Estimation (HAE) by establishing {\bf two key properties}. First, we show that HAE yields an unbiased estimator of the policy gradient with respect to both high-level subgoal decisions and low-level action execution, up to standard bootstrapping and value-function approximation errors. Second, we demonstrate that HAE achieves variance reduction compared to flat advantage estimation by aligning credit assignment with the hierarchical structure induced by subgoal segments. Together, these results establish HAE as a principled learning mechanism for jointly optimizing planning and execution in hierarchical LLM agents.

Empirically, we evaluate HiPER on two interactive benchmarks, ALFWorld~\citep{ALFWorld20}, a text-based embodied household environment and WebShop~\citep{yao2022webshop}, a simulated website interaction environment, where it achieves, to our knowledge, state-of-the-art performance, with 97.4\% success rate on ALFWorld, and 83.3\% success rate on WebShop with Qwen2.5-7B-Instruct, as shown in Fig.~\ref{fig:header}.

We summarize our contributions as follows.

$\bullet$ {\bf Implicit hierarchy identification}. We identify a consistent implicit hierarchical structure in successful multi-turn LLM agent behavior and make it explicit through a Plan–Execute interface that separates high-level subgoal planning from low-level action execution.

$\bullet$ {\bf Hierarchical credit assignment.} Central to our proposed HiPER algorithm is the HAE, a coupled, two-timescale advantage estimation scheme under the Plan-Execute interface. We provide theoretical guarantees showing unbiasedness of HAE up to standard approximation errors and provable variance reduction compared to flat Generalized Advantage Estimation (GAE).

$\bullet$ {\bf Unified hierarchical RL framework.} We integrate the Plan-Execute interface and HAE into the HiPER framework, which has superior empirical performance over baselines on multiple interactive benchmarks.

\section{Related Work}
\label{sec:related}
\noindent{\bf Hierarchical RL (HRL).}
Classical HRL formalizes temporal abstraction via the \emph{options} framework~\citep{sutton1999between}, where a policy selects a temporally extended option with its own intra-option policy and termination, inducing a semi-MDP hierarchy. Early methods assume a fixed option set and execution semantics~\citep{sutton1998intra}. Later work explores learning options end-to-end (e.g., Option-Critic~\citep{bacon2017option}, PPOC~\citep{klissarov2017learnings}, DAC~\citep{zhang2019dac}, h-DQN~\citep{kulkarni2016hierarchical}) but still fixes the option inventory (e.g., a predetermined number of options). HiPER is motivated by the same temporal abstraction principle, but is not a direct transplant of options to LLM agents: instead of learning a discrete option set, we use open-vocabulary subgoals and introduce a hierarchical advantage estimator that explicitly propagates credit across segment boundaries, yielding stronger learning signals on long-horizon tasks.

\noindent{\bf RL for LLM Agents.}
Recent work trains LLMs as interactive agents with on-policy RL, typically modeling agents as flat turn/token policies while trying to improve optimization and credit assignment. LOOP adapts PPO with group leave-one-out for long-horizon agent training~\citep{chen2025reinforcement}; RAGEN (StarPO) studies trajectory-level agent RL and its stability pathologies~\citep{wang2025ragen}; GiGPO uses trajectory- and step-level relative advantages for denser credit signals~\citep{feng2025group}; and implicit step rewards from process reward modeling provide additional shaping~\citep{liu2025istar}. Our proposed HiPER target the same setting but make hierarchy explicit via Plan--Execute and align advantage estimation with the hierarchy structure.
% \vspace{-0.3cm}
\section{Preliminaries}
\label{sec:prelim}
\paragraph{RL for Interactive LLM agents.}
We consider an interactive setting where an LLM-based agent completes multi-step tasks specified by a textual description $x \sim p(X)$.
At each environment step $t=1,2,\dots,T$, the agent receives a textual prompt $p_t:=\text{Format}(s_t)$, where $s_t \in \mathcal{S}$ is the state observation, $\text{Format}(\cdot)$ is a prompt template that wraps the state in a textual description, and produces a textual action $a_t \in \mathcal{V}^{\le n}$, i.e., a token sequence over vocabulary $\mathcal{V}$ with maximum length $n$.
After executing $a_t$, the environment returns a scalar reward $r_t \in \mathbb{R}$ and transitions to the next state $s_{t+1}$. A full episode is a trajectory $\tau=\{(s_0,a_0,r_0),\dots,(s_{T-1},a_{T-1},r_{T-1})\}$. The objective function of RL training for the agent can be written as:
\begin{equation}
\label{eq:rl_obj}
J(\theta) \;=\; \mathbb{E}_{x \sim p(X)} \, \mathbb{E}_{\tau \sim \pi_\theta(\cdot \mid x)}\left[\sum_{t=0}^{T-1}\gamma^{t} r_t\right],
\end{equation}
where $\theta$ denotes the LLM parameters, and $\gamma \in (0,1]$ is the discount factor.
This formulation treats each interaction as selecting a primitive action $a_t$ at a single time scale. Next, we introduce the proposed  hierarchical formulation for modeling LLM agentic tasks.

\noindent{\bf Hierarchical RL for Interactive Agents.}
    To formally represent the underlying hierarchical structure for LLM agents, we introduce a hierarchical RL formulation based on the options framework~\cite{sutton1999between, barto2003recent}. Concretely, we model the agent as operating with a high-level option $o_t \in \mathcal{O}$ (e.g., a subgoal) and a low-level action $a_t \in \mathcal{A}$ at each turn. The high-level option $o_t$  in our case corresponds the planning decision, which the agent chooses that guides the low-level actions for executions. These $o_t$'s persists for multiple turns before being switched to a new subgoal, and we call such a period of time a `planning segment'.
    The switching time is governed by a binary decision $q_t \in \{0,1\}$, where $q_t=1$ indicates terminating the current option and switching to a new one. We parameterize the switching policy by $\eta$, as $q_t \sim \pi^{\text{switch}}_\eta(\cdot\mid s_t,o_{t-1})$. 
    Denote by turn indices $0=b_0 < b_1 < \cdots < b_K = T$ the \textit{boundary turns}, where termination and switch of option is determined, and $K$ denotes the number of planning segments. At each boundary index $b_k$, the high-level context is updated according to $o_k \sim \pi^{\text{high}}_{\psi}(\cdot\mid s_{b_k})$. At each turn $t$, the low-level action is generated with $\pi^\text{low}_{\phi}(\cdot\mid s_t,o_t)$. The trajectory resulting from this process can be written as $\tau=\{(s_t,q_t,o_t,a_t,r_t)\}_{t=0}^{T-1}$. 
    The joint hierarchical policy is thus factorized as:
{
\begin{equation}
\label{eq:hrl_factor}
\begin{aligned}
\pi_{\eta,\psi,\phi}&(\tau\mid x)
=
\prod_{t=0}^{T-1}\underbrace{\pi_\eta(q_t\mid s_t,o_{t-1})}_{\text{switch}} \underbrace{\left(
(1-q_t)\mathbf{1}[o_t=o_{t-1}]
+q_t\pi^{\text{high}}_{\psi}(o_t\mid s_t)
\right)}_{\text{subgoal}}\underbrace{\pi^{\text{low}}_{\phi}(a_t\mid s_t,o_t)}_{\text{action}}\,\underbrace{p(s_{t+1}\mid s_t,a_t)}_{\text{environment}}.
\end{aligned}
\end{equation}
}%
where $\mathbf{1}[\cdot]$ is the indicator function, enforcing that the previous subgoal is deterministically carried over if $q_t=0$. The RL training objective under this hierarchical factorization can be therefore expressed as:
\begin{equation}
\label{eq:hrl_obj}
J(\eta,\psi,\phi)
\;=\;
\mathbb{E}_{x \sim p(X)} \,
\mathbb{E}_{\tau \sim \pi_{\eta,\psi,\phi}(\cdot \mid x)}\left[\sum_{t=0}^{T-1}\gamma^{t} r_t\right],
\end{equation}
where $\pi_{\eta,\psi,\phi}(\cdot\mid x)$ denotes the hierarchical policy and $\gamma\in(0,1]$ is the discount factor. The key in the above formulation~\eqref{eq:hrl_factor} and~\eqref{eq:hrl_obj} is the explicit separation of high-level planning and low-level execution, which provides a structured view of the agent’s decisions across time scales and serves as the basis for our Plan--Execute interface and hierarchical learning algorithm introduced in Section~\ref{sec:method}.

% \vspace{-0.2cm}
\section{Method}
\label{sec:method}
We adapt the hierarchical RL framework from Section~\ref{sec:prelim} to LLM-based agents, so that the abstract decomposition in~\eqref{eq:hrl_factor} is concretely instantiated as a structured auto-regressive generation process, allowing us to leverage the hierarchical structure to design more efficient and stable learning algorithm.
In this section, we first introduce the {\it Plan-Execute agent interface}, a system prompt template designed to instantiate the hierarchical policy defined in~\eqref{eq:hrl_factor} for auto-regressive LLMs. This interface explicitly prompts the model to emit planning and execution decisions in a structured format (to be described shortly). Then, we present a novel hierarchical RL algorithm that exploits the hierarchical structure for more efficient and stable learning.
% \vspace{-0.2cm}
\subsection{Plan-Execute Framework for Hierarchical RL}
\label{sec:interface}
We introduce our proposed Plan-Execute framework, in which the LLM agent makes decisions at two time scales: on the high-level, 
it maintains a persistent subgoal for planning that guides its behavior across multiple turns, as well as decides when to switch to another subgoal; on the low-level, it executes primitive actions that interact with the environment given the current subgoal, as shown in Fig.\ref{fig:overview}. Essentially, we implement the Plan-Execute interface via a structured system prompt template by extending the ReAct~\citep{yao2022react} prompting. We present complete system prompt templates in Appendix~\ref{sec:supp_prompt}.

At each environment step $t$, the agent receives a textual prompt $p_t=\text{Format}(s_t, o_{t-1})$, containing the current state observation and the previous subgoal, then produces a structured output 
\begin{equation}
\label{eq:output}
\begin{aligned}
    \langle q_t,o_t,a_t \rangle= &\verb|<switch>...</switch>| \\
    &\verb|<subgoal>...</subgoal>| \\
    &\verb|<action>...</action>|,
\end{aligned}
\end{equation}
where $q_t$ is a binary switch decision made by the agent, given the state $s_t$ and previous subgoal $o_{t-1}$, enclosed by \verb|<switch>| tags; $o_t \in \mathcal{V}^{\le m}$ is the (possibly updated) current subgoal text enclosed by \verb|<subgoal>| tags; $a_t \in \mathcal{V}^{\le n}$ is the primitive action with \verb|<action>| tags. Particularly, we only allow $q_t$ to be either $\verb|<switch>|\SWITCH\verb|</switch>|$ or $\verb|<switch>|\KEEP\verb|</switch>|$.
For simplicity, we write $q_t=1$ if the decision is $\SWITCH$ and $q_t=0$ if $\KEEP$. When $q_t=0$, the agent retains the previous subgoal and copies as-is to the current subgoal, i.e., $o_t=o_{t-1}$; when $q_t=1$, it generates a new subgoal $o_t$. 

Using the above template, our proposed Plan-Execute framework induces the three policies specified in~\eqref{eq:hrl_factor}. Formally, let $\theta$ be the LLM parameter, at step $t$ we first sample the switch decision $q_t \sim \pi_\theta(\cdot \mid s_t, o_{t-1})$. If $q_t=1$ we sample a new subgoal $o_t \sim \pi_\theta(\cdot \mid s_t)$, otherwise $o_t=o_{t-1}$. It then follows by a low-level action conditioned on the current option, $a_t \sim \pi_\theta(\cdot \mid s_t, o_t)$. The switching, subgoal, and action policies $\pi^{\text{switch}}_\eta$, $\pi^{\text{high}}_{\psi}$, and $\pi^{\text{low}}_{\phi}$ are realized by a {\bf single auto-regressive LLM policy} $\pi_\theta$ that emits $(q_t,o_t,a_t)$. Importantly, the switching, subgoal and action fields are generated {\it in order}, so the correct conditioning is directly achieved through the LLM's auto-regressive factorization. 

By directly leveraging the LLM’s native auto-regressive factorization without introducing separate high- or low-level controllers, this Plan-Execute instantiation yields a convenient %clean 
policy decomposition that simplifies subsequent computation and algorithm design. However, specifying the Plan-Execute structure alone is not sufficient. Plan-Execute defines what decisions (switch/subgoal/action) are generated and when, but it remains unclear how these hierarchical, coupled decisions could be effectively learned, epecially under long-horizon, sparse reward settings. To this end, we next derive the policy gradient under the Plan-Execute factorization. This derivation reveals that the learning signal naturally decomposes into level-specific terms corresponding to switching, subgoal selection, and low-level actions, which in turn motivates hierarchical advantage estimation and more stable updates that explicitly exploit the policy structure.

% \vspace{-0.3cm}
\subsection{Plan-Execute Policy Gradient}
Given the Plan-Execute policy factorization in~\eqref{eq:hrl_factor} and our learning objective defined in~\eqref{eq:hrl_obj}, the following theorem decomposes the policy gradient into contributions from switching, subgoal, and action decisions. 

\begin{theorem}[Plan-Execute Gradient]
\label{thm:gradient}
Assume the Plan-Execute policy is given by the conditionals
$\pi_\theta(q_t \mid s_t,o_{t-1})$, $\pi_\theta(o_t \mid s_t)$ (invoked only when $q_t=1$), and
$\pi_\theta(a_t \mid s_t,o_t)$. Then the gradient of~\eqref{eq:hrl_obj} is
{\small
\begin{equation}
\label{eq:options_grad}
\begin{aligned}
\nabla_\theta J(\theta)=\mathbb{E}_{x\sim p(X)}\mathbb{E}&_{\tau\sim\pi_\theta}\bigg[
\sum_{t=0}^{T-1}\Big(\nabla_\theta \log \pi_\theta(q_t \mid s_t,o_{t-1}) \, A^{\mathrm{switch}}_t \\
& + q_t\,\nabla_\theta \log \pi_\theta(o_t \mid s_t) A^{\mathrm{high}}_t  +\nabla_\theta \log \pi_\theta(a_t \mid s_t,o_t)A^{\mathrm{low}}_t
\Big)\bigg],
\end{aligned}
\end{equation}
}
where the advantages are defined by:
{\small
\[
\begin{aligned}
A^{\mathrm{switch}}_t &:= \underbrace{\mathbb{E}[G_t \mid s_t,o_{t-1},q_t]}_{Q^{\mathrm{switch}}(s_t,o_{t-1},q_t)}
-\underbrace{\mathbb{E}[G_t \mid s_t,o_{t-1}]}_{V^{\mathrm{switch}}(s_t,o_{t-1})},\\
A^{\mathrm{high}}_t &:= \underbrace{\mathbb{E}[G_t \mid s_t, q_t=1 , o_t]}_{Q^{\mathrm{high}}(s_t,o_t)}
-\underbrace{\mathbb{E}[G_t \mid s_t,q_t=1]}_{V^{\mathrm{high}}(s_t)},\\
A^{\mathrm{low}}_t &:= \underbrace{\mathbb{E}[G_t \mid s_t,o_t,a_t]}_{Q^{\mathrm{low}}(s_t,o_t,a_t)}
-\underbrace{\mathbb{E}[G_t \mid s_t,o_t]}_{V^{\mathrm{low}}(s_t,o_t)},
\end{aligned}
\]
$G_t:=\sum_{t'=t}^{T-1}\gamma^{t'-t}r_{t'}$ is the return-to-go, $T$ denotes the total number of environment steps, and expectations are taken over future rollout from time $t$ onward.}

\end{theorem}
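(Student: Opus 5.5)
The plan is to run the standard likelihood-ratio (REINFORCE) derivation on the factorized trajectory density~\eqref{eq:hrl_factor}, then replace returns by advantages using causality and baseline arguments. Throughout, $\eta,\psi,\phi$ are identified with the single parameter $\theta$.

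\textbf{Step 1: score-function identity.} Write $J(\theta)=\mathbb{E}_x\sum_\tau \pi_\theta(\tau\mid x)R(\tau)$ with $R(\tau)=\sum_t\gamma^t r_t$. The action space is finite, since tokens have bounded length, and the horizon is finite, so we may differentiate under the sum to get $\nabla_\theta J=\mathbb{E}[\nabla_\theta\log\pi_\theta(\tau\mid x)\,R(\tau)]$. Taking the log of~\eqref{eq:hrl_factor}, the environment factors $p(s_{t+1}\mid s_t,a_t)$ and $p(X)$ carry no $\theta$-dependence. The subgoal factor needs care: it equals $\mathbf{1}[o_t=o_{t-1}]$ when $q_t=0$, which has zero gradient, and $\pi_\theta(o_t\mid s_t)$ when $q_t=1$. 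Its log-gradient is therefore exactly $q_t\nabla_\theta\log\pi_\theta(o_t\mid s_t)$, which holds on the support of the trajectory distribution. This yields three score terms per turn, each multiplied by $R(\tau)$.

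\textbf{Step 2: causality.} For each score term at time $t$, split $R(\tau)$ into rewards before and after $t$. Let $\mathcal{F}$ denote the history up to the conditioning variables of that decision. Rewards $r_{t'}$ with $t'<t$ are measurable with respect to $\mathcal{F}$, so their cross term vanishes because $\mathbb{E}[\nabla\log\pi(d\mid\mathcal F)\mid\mathcal F]=\sum_d\nabla\pi(d\mid\mathcal F)=0$. For $q_t=1$ the subgoal score is handled by conditioning on $q_t=1$ as well. What remains is $\gamma^t G_t$ in place of $R(\tau)$. I expect the $\gamma^t$ factor to be absorbed or dropped, as in the common convention for the paper's expression, and I would note this explicitly: with $\gamma=1$ it is exact.

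\textbf{Step 3: conditional expectations and baselines.} This is the main obstacle. The three decisions occur at the same turn in the sequence $q_t\to o_t\to a_t$, and the ``state'' for each conditional is not Markov unless $(s_t,o_{t-1})$ summarizes the history. I would assume, as the factorization implicitly does, that the future given $(s_t,o_{t-1})$ is independent of the earlier history. This holds because all three policies and the dynamics depend only on $(s_t,o_{t-1})$ and the current-turn variables.

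With that assumption, the tower property handles each score separately.
\begin{itemize}
\item For the switch score, condition on $(s_t,o_{t-1},q_t)$. The future return then has expectation $Q^{\mathrm{switch}}$, and subtracting $V^{\mathrm{switch}}(s_t,o_{t-1})$ is a valid baseline since it does not depend on $q_t$.
\item For the subgoal score, condition on $(s_t,q_t=1,o_t)$. Under $q_t=1$, $o_t$ is sampled from $\pi(\cdot\mid s_t)$ alone, and the Markov property gives $Q^{\mathrm{high}}$. The baseline $V^{\mathrm{high}}(s_t)$ has zero mean against the score.
\item For the action score, condition on $(s_t,o_t,a_t)$. This gives $Q^{\mathrm{low}}$, with baseline $V^{\mathrm{low}}(s_t,o_t)$.
\end{itemize}

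The subtle point is checking that conditioning on $(s_t,o_t)$ screens off $(q_t,o_{t-1})$ for the future. It does, since the next state depends only on $a_t$ and $s_t$, and the next switch depends on $(s_{t+1},o_t)$. Summing the three pieces gives~\eqref{eq:options_grad}.
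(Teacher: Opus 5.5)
Your proposal follows the paper's proof essentially step for step: the score-function identity on the factorized density (with the subgoal factor reduced to $\pi_\theta(o_t\mid s_t)^{q_t}$ on the support), removal of the prefix return via the zero-mean score, the tower property to pass from $G_t$ to $Q^{\mathrm{switch}}$, $Q^{\mathrm{high}}$, $Q^{\mathrm{low}}$, and subtraction of decision-independent baselines. Your caveat about $\gamma^t$ is correct and catches something the paper's proof glosses over: its step ``replace $R_\tau$ by $G_t$'' justifies discarding the prefix rewards but also silently drops the $\gamma^t$ factor, so the displayed identity is exact only for $\gamma=1$ (for $\gamma<1$ each time-$t$ term must be weighted by $\gamma^t$).
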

Proof of Theorem~\ref{thm:gradient} is deferred to Appendix~\ref{sec:supp_proof_gradient}. Theorem~\ref{thm:gradient} indicates that the gradient of Plan-Execute policy naturally decomposes into coupled components operating at two different time scales: a high-level process that optimizes subgoal selection and switching through $A_t^{\text{high}}$ and $A_t^{\text{switch}}$, and a low-level process that optimizes primitive action generation within each subgoal segment through $A_t^{\text{low}}$. Next, we discuss how to properly estimate the advantages ${A_t^{\text{switch}}, A_t^{\text{high}}, A_t^{\text{low}}}$ from the rollout trajectories. 
% \vspace{-0.2cm}
\subsection{Hierarchical Advantage Estimation}
\label{sec:hae}
Theorem~\ref{thm:gradient} clarifies how the Plan–Execute factorization decomposes the learning signal across switching, subgoal, and action decisions. We now turn this decomposition into a practical credit-assignment scheme by constructing advantage estimators tailored to each level. Given a batch of on-policy trajectories, our goal is to construct advantage estimators as reliable learning signals for subgoal switching, subgoal generation, and action execution, that facilitate efficient and stable policy optimization, under potentially sparse and delayed rewards. A direct approach is to use Monte-Carlo returns, combined with group baselines, such as in GRPO~\citep{shao2024deepseekmath} and RLOO~\citep{ahmadian2024back}, to construct step-level advantages. While doing so avoids training a critic model, they require sampling multiple rollouts per decision point, which becomes prohibitively expensive in interactive long-horizon environments \cite{feng2025group}. Therefore, we adopt an on-policy, PPO-style actor-critic update with learned value baselines. 

We propose the Hierarchical Advantage Estimation (HAE), built upon Generalized Advantage Estimation (GAE)~\citep{schulman2015high}, and extend it to the Plan--Execute structure by constructing \emph{hierarchical} advantage estimates at two time scales. The challenge of this extension involves (i) aligning learning signals across time scales, and (ii) handling the inherent coupling between levels. While prior option-critic-based methods~\cite{bacon2017option,klissarov2017learnings,zhang2019dac} construct and optimize hierarchical policies similar to~\eqref{eq:hrl_factor}, they typically estimate high- and low-level learning signals with largely parallel targets, leaving the cross-level coupling unaddressed. {We tackle these challenges via a novel boundary-aware bootstrapping and critic learning that judiciously couples the two levels}. Concretely, the proposed HAE learns value baselines for low-level action execution and high-level subgoal decisions {\it jointly}, which are used to define hierarchical TD residuals and GAE-style advantages. We next describe how we derive low-level (action) and high-level (subgoal generation and switching) advantages from a sampled trajectory. 

First, the $\SWITCH$ decisions partition the trajectory into segments in which the subgoal remains constant. Within each segment, we compute turn-level TD residuals and estimate advantages for action execution, conditioned on the current option. In parallel, we aggregate rewards within each segment and form segment-level TD residuals at switching boundaries, yielding a GAE-style advantage for subgoal selection. Notably, the turn-level GAE is computed \emph{within} each segment, but the final-step residual in each segment bootstraps to the segment-level value at the next option boundary, which couples within-segment credit assignment with boundary-to-boundary progress. Finally, we derive a switching advantage for the $\SWITCH/\KEEP$ decisions, capturing the incremental benefit of terminating the current option versus continuing it at the same state.

More formally, recall
$0 = b_0 < b_1 < \cdots < b_K = T$
are the switching boundary indices, where $T$ is the episode length, and let the $k$-th segment be the time interval
$t \in [b_k, b_{k+1}-1]$ during which subgoal $o_k$ is active. Let $G_t:=\sum_{t'=t}^{T-1}\gamma^{t'-t}r_{t'}$ denote the return-to-go, we define the high-level value baseline $V^{\text{high}}(s_t):=\mathbb{E}[G_t\mid s_t,q_t=1]$ and the low-level value baseline $V^{\text{low}}(s_t,o_t):=\mathbb{E}[G_t\mid s_t, o_t]$, which are used in the following advantage estimation. Intuitively, the high-level value models the expected return at decision points where the agent proposes a \emph{new} subgoal, serving as the baseline for the advantage estimation of subgoal selection at segment boundaries; the low-level value models the expected return given the \emph{current} subgoal commitment, serving as the baseline for the advantage estimation of action selection within the segment.

\noindent{\bf Execution advantage (low-level for action selection).} We apply a GAE-style estimator restricted to each segment: for $t \in [b_k, b_{k+1}-1]$, the low-level TD residual and the corresponding advantage are:
{\small
\begin{equation}
\label{eq:low_gae}
\begin{aligned}
    &\delta^{\text{low}}_t
= r_t + \gamma \, V_t^{\text{next}}
        - V^{\text{low}}(s_t, o_k), \\
    &\hat{A}^{\text{low}}_t
= \sum_{\ell=t}^{b_{k+1}-1}
    (\gamma \lambda^{\text{low}})^{\ell - t} \, \delta^{\text{low}}_\ell,
\;\;
t \in [b_k, b_{k+1}-1],
\end{aligned}
\end{equation}
}
where
% \vspace{-0.3cm}
\begin{equation}
\label{eq:vnext}
V^{\text{next}}_t =
\begin{cases}
V^{\text{high}}(s_{b_{k+1}}), & t=b_{k+1}-1 \\
V^{\text{low}}(s_{t+1}, o_k), & \text{otherwise}.
\end{cases}
% \vspace{-0.2cm}
\end{equation}
Here, $V^{\text{low}}$ represents the low-level value of state $s$ conditioned on subgoal $o$, $V^{\text{high}}$ represents the high-level value at boundary states, and $\lambda^{\text{low}}$ is the low-level TD parameter.
This assigns fine-grained credit to primitive actions inside each segment. Notably, as reflected in~\eqref{eq:vnext}, the low-level learning is connected to the high-level process by bootstrapping the final-step residual to the segment-level value at the next subgoal boundary.

\noindent{\bf Planning advantage (high-level for subgoal generation).} Each segment is compressed into a macro-step, with segment-level reward $\tilde r_k = \sum_{t=b_k}^{b_{k+1}-1}\gamma^{t-b_k} r_t$ and duration discount $\tilde\gamma_k = \gamma^{b_{k+1}-b_k}$. The high-level TD residual and advantages are: 
{\small
\begin{equation}
\label{eq:high_gae}
\begin{aligned}
&\delta^{\text{high}}_k
= \tilde r_k
  + \tilde\gamma_k \, V^{\text{high}}(s_{b_{k+1}})
  - V^{\text{high}}(s_{b_k}), \\
&\hat{A}^{\text{high}}_{b_k}
= \sum_{j=k}^{K-1}
\left(
  \prod_{i=k}^{j-1} \tilde\gamma_i \lambda^{\text{high}}
\right)
\delta^{\text{high}}_j, \qquad k\in [0, K-1],
\end{aligned}
\end{equation}
}
where $V^{\text{high}}(s)$ represents the high-level value of state $s$, and $\lambda^{\text{high}}$ is the high-level TD parameter.

\noindent{\bf Switching advantage (high-level for subgoal switching).}
The switching decision is a \emph{binary} choice between committing to the current subgoal versus terminating and handing control back to the high-level plan. We therefore first define a per-state switching gain,
\begin{equation}
\delta^{\text{switch}}_t \;:=\; V^{\text{high}}(s_t)\;-\;V^{\text{low}}(s_t,o_{t-1}),
\end{equation}
which measures how much better it is, in expectation, to switch to a new subgoal, rather than continue executing the previous subgoal from the same state. Define $\beta_t:=\pi_\theta(q_t=1\mid s_t,o_{t-1})$. The learning signal for the realized binary decision is
\begin{equation}
\label{eq:term_adv}
\hat{A}^{\text{switch}}_t
= \big(q_t - \beta_t\big)\delta^{\text{switch}}_t,
\end{equation}
which can be interpreted as a centered policy-gradient estimator for the binary choice: it increases the log-probability of switching when $q_t=1$ and switching is estimated to be beneficial ($\delta^{\text{switch}}_t>0$), and decreases it when switching is taken but is suboptimal ($\delta^{\text{switch}}_t<0$).
Once the advantages $\hat{A}^{\text{high}}, \hat{A}^{\text{low}},$ and $\hat{A}^{\text{switch}}$ are calculated, they can be readily plugged into the gradient in~\eqref{eq:options_grad} for policy updates.

% \vspace{-0.2cm}
\paragraph{Critic Learning.}
Our hierarchical advantage estimators rely on two value baselines: a low-level critic $V^{\text{low}}(s_t,o_k)$ for action execution within a segment, conditioned on the current subgoal, and a high-level critic $V^{\text{high}}(s_{b_k})$ for subgoal decisions at switching boundaries. In practice, we do not have access to the exact value baselines, therefore, we train critic models to fit the values at both high- and low-level. Let $\phi$ denote the critic model parameters, $V_\phi^{\text{low}}(s,o)$ and $V_\phi^{\text{high}}(s)$ denote the low-level and high-level critic, respectively.

Concretely, we train the high-level critic $ V_\phi^{\text{high}}$ to regress to the segment-level bootstrapped target $y^{\text{high}}$:
\begin{equation}
\label{eq:high_target}
y^{\text{high}}_k \;:=\; \tilde r_k + \tilde\gamma_k\text{sg}(V_\phi^{\text{high}}(s_{b_{k+1}})),
\qquad k\in[0,K-1],
\end{equation}
where $\text{sg}(\cdot)$ denotes stop gradient. Then the loss function for the high-level critic is:
\begin{equation}
\mathcal{L}_{V^{\text{high}}}(\phi):=\mathbb{E}\left[\sum_{k=0}^{K-1}\big(V_\phi^{\mathrm{high}}(s_{b_{k}})-y^\text{high}_k\big)^2\right],
\end{equation}
Similarly, we train the low-level critic $ V_{\phi}^{\text{low}}$ to regress to the bootstrapped turn-level target $y^{\text{low}}$:
\begin{equation}
\label{eq:low_target}
y^{\text{low}}_t \;=\; r_t + \gamma\, \text{sg}(\hat V^{\text{next}}_t),
\qquad t\in[b_k,b_{k+1}-1],
\end{equation}
where $\hat V^\text{next}$ is defined as:
\begin{equation}
% \label{eq:vphinext}
\hat V^{\text{next}}_t =
\begin{cases}
V_\phi^{\text{high}}(s_{b_{k+1}}), & t=b_{k+1}-1 \\
V_\phi^{\text{low}}(s_{t+1}, o_k), & \text{otherwise}.
\end{cases}
% \vspace{-0.2cm}
\end{equation}
Then the loss for the low-level critic is:
\begin{equation}
\mathcal{L}_{V^{\text{low}}}(\phi):=\mathbb{E}\left[\sum_{t=0}^{T-1}\big(V_\phi^{\mathrm{low}}(s_t,o_k)- y^\text{low}_t\big)^2\right]
\end{equation}
Notably, the low-level and high-level critic learning are coupled in the same way as in advantage estimation, so that the final-step target in each segment bootstraps to the boundary high-level value $V^{\text{high}}(s_{b_{k+1}})$, ensuring that the low-level value estimates remain consistent with high-level boundary returns and can propagate learning signals across segments. Importantly, although HAE involves two value functions, it does not require training two separate critics. In practice, we use a single shared critic backbone with two output heads, only incurring \emph{negligible} memory overhead relative to standard PPO (see Appendix~\ref{sec:supp_memory}).

We optimize the Plan-Execute actor and critic models using the PPO-style clipped objectives, as defined in Equations~\eqref{eq:actor_loss},~\eqref{eq:critic_loss} and~\eqref{eq:ppo_loss} in Appendix~\ref{sec:supp_ppo_details}, with the advantages and critic targets discussed above. 
We summarize the complete hierarchical learning algorithm in Algorithm~\ref{alg:hiper}, which is deferred to Appendix~\ref{sec:supp_ppo_details}, along with full PPO loss definitions and implementation details. In short, the full training loop of HiPER described in Algorithm~\ref{alg:hiper} involves collecting Plan-Execute rollouts, estimating hierarchical advantages from the rollouts, calculating the actor and critic losses, and updating parameters via PPO-style clipped objectives with KL regularization.

% \vspace{-0.3cm}
\subsection{Unbiasedness and Variance Reduction}
% \vspace{-0.1cm}
In this section, we establish two theoretical properties of the policy-gradient estimator induced by our hierarchical advantage estimation: (i) it is \textit{unbiased} up to the GAE bootstrapping and critic approximation errors, and (ii) it achieves \textit{variance reduction} relative to a flat GAE baseline by exploiting the hierarchical structure of Plan-Execute trajectories.

\begin{theorem}
\label{thm:hae_unbiased}
    Let $\hat{g}(\theta)$ be the gradient estimate by substituting $\{\hat A^{\text{switch}}_t,\hat A^{\text{high}}_{b_k},\hat A^{\text{low}}_t\}$ from~\eqref{eq:low_gae}, \eqref{eq:high_gae} and~\eqref{eq:term_adv} into \eqref{eq:options_grad}, with learned critics $\hat V^{\text{low}}$ and $\hat V^{\text{high}}$, into the gradient in~\eqref{eq:options_grad}. Define the corresponding oracle estimator $g^{\lambda}(\theta)$ by using the same formulas but with the true value functions $V^{\text{low}},V^{\text{high}}$. Then
    {\small
    \begin{equation*}
    % \label{eq:bias_decomp_grad}
    \mathbb{E}\big[\hat g(\theta)\big] - \nabla_\theta J(\theta)
    =
    \underbrace{\Big(\mathbb{E}\big[g^{\lambda}(\theta)\big]-\nabla_\theta J(\theta)\Big)}_{\text{GAE bootstrapping bias}}+
    \underbrace{\mathbb{E}\big[\hat g(\theta)-g^{\lambda}(\theta)\big]}_{\text{critic approximation bias}}.
    \end{equation*}
    }
    In particular, when $\lambda^{\text{low}}=\lambda^{\text{high}}=1$ (Monte Carlo estimation) and $\hat V^{\text{low}}=V^{\text{low}},\hat V^{\text{high}}=V^{\text{high}}$ (critics are perfectly learned), $\mathbb{E}[\hat g(\theta)]=\nabla_\theta J(\theta)$, i.e. $\hat g(\theta)$ is an unbiased stochastic gradient estimator.
\end{theorem}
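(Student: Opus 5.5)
The identity in Theorem~\ref{thm:hae_unbiased} is a pure add-and-subtract of $\mathbb{E}[g^{\lambda}(\theta)]$, so it holds trivially once all three expectations are finite (bounded rewards, finite horizon $T$ and bounded score functions suffice). The real content is the special case: with $\lambda^{\text{low}}=\lambda^{\text{high}}=1$ and exact critics, the oracle estimator $g^{1}(\theta)$ has expectation $\nabla_\theta J(\theta)$. I would prove this by reducing to Theorem~\ref{thm:gradient}. I will show that each plug-in advantage, multiplied by its score function, has the same expectation as the corresponding term with the true $A^{\mathrm{switch}}_t$, $A^{\mathrm{high}}_t$ or $A^{\mathrm{low}}_t$. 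Throughout I use the convention $V^{\text{high}}(s_{b_K})=V^{\text{high}}(s_T)=0$ at termination.

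\textbf{Switching term.} I expect this to be exact for any $\lambda$. Conditioning on $(s_t,o_{t-1})$, the two cases give
\begin{equation*}
Q^{\mathrm{switch}}(s_t,o_{t-1},1)=\mathbb{E}[G_t\mid s_t,q_t=1]=V^{\text{high}}(s_t),
\end{equation*}
using that the subgoal and everything after depend only on $s_t$ once $q_t=1$. If instead $q_t=0$, then $o_t=o_{t-1}$ deterministically, so
\begin{equation*}
Q^{\mathrm{switch}}(s_t,o_{t-1},0)=V^{\text{low}}(s_t,o_{t-1}).
\end{equation*}
Hence $V^{\mathrm{switch}}=\beta_tV^{\text{high}}+(1-\beta_t)V^{\text{low}}$, and therefore $A^{\mathrm{switch}}_t=(q_t-\beta_t)\delta^{\text{switch}}_t=\hat A^{\text{switch}}_t$ identically.

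\textbf{Low-level term.} With $\lambda^{\text{low}}=1$, the within-segment sum \eqref{eq:low_gae} telescopes for $t\in[b_k,b_{k+1}-1]$ to
\begin{equation*}
\hat A^{\text{low}}_t=\sum_{\ell=t}^{b_{k+1}-1}\gamma^{\ell-t}r_\ell+\gamma^{b_{k+1}-t}V^{\text{high}}(s_{b_{k+1}})-V^{\text{low}}(s_t,o_k),
\end{equation*}
because the intermediate $V^{\text{low}}(s_{\ell+1},o_k)$ terms cancel.

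\textbf{Main obstacle.} The key step, and the one I expect to be delicate, is showing
\begin{equation*}
\mathbb{E}\big[\gamma^{b_{k+1}-t}V^{\text{high}}(s_{b_{k+1}})\,\big|\,\mathcal{F}_t\big]=\mathbb{E}\big[\gamma^{b_{k+1}-t}G_{b_{k+1}}\,\big|\,\mathcal{F}_t\big],
\end{equation*}
where $\mathcal{F}_t$ is the history through $(s_t,o_t,a_t)$. The difficulty is that $b_{k+1}$ is a random stopping time: it is the first $t'>t$ with $q_{t'}=1$. I would handle this with a decomposition over the event $\{b_{k+1}=t'\}$, which is measurable given the history up to $s_{t'}$ and $q_{t'}=1$. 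On that event I apply the Markov property of the factorization \eqref{eq:hrl_factor}: after a switch, the future depends only on $s_{t'}$, so $\mathbb{E}[G_{t'}\mid\text{history},q_{t'}=1]=V^{\text{high}}(s_{t'})$. In effect this is a strong Markov argument for a finite-horizon chain. It gives $\mathbb{E}[\hat A^{\text{low}}_t\mid\mathcal{F}_t]=Q^{\text{low}}(s_t,o_t,a_t)-V^{\text{low}}(s_t,o_t)$. Combined with the tower property, the $\nabla\log\pi(a_t\mid s_t,o_t)$ term then matches that of Theorem~\ref{thm:gradient}. The baseline $V^{\text{low}}(s_t,o_t)$ also has zero mean against the score function, so only the $Q$ part actually matters.

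\textbf{High-level term.} With $\lambda^{\text{high}}=1$, the product weights in \eqref{eq:high_gae} become $\prod_{i=k}^{j-1}\tilde\gamma_i=\gamma^{b_j-b_k}$. The macro-step residuals then telescope, giving $\hat A^{\text{high}}_{b_k}=G_{b_k}-V^{\text{high}}(s_{b_k})$ exactly, since $V^{\text{high}}(s_T)=0$. Since $\mathbb{E}[G_{b_k}\mid s_{b_k},q_{b_k}=1,o_{b_k}]=Q^{\text{high}}(s_{b_k},o_{b_k})$, the term $q_t\nabla\log\pi(o_t\mid s_t)\hat A^{\text{high}}_t$ matches in expectation, noting that $q_t=1$ exactly at boundaries. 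Summing the three matched terms and invoking Theorem~\ref{thm:gradient} gives $\mathbb{E}[\hat g]=\nabla_\theta J$ in this special case, which also makes both bias terms in the decomposition vanish.
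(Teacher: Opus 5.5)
Your proposal is correct and follows essentially the same route as the paper's proof. Both use the add-and-subtract decomposition. Both then show that, with $\lambda^{\text{low}}=\lambda^{\text{high}}=1$ and exact critics, the low- and high-level estimators telescope to targets with conditional means $Q^{\text{low}}-V^{\text{low}}$ and $Q^{\text{high}}-V^{\text{high}}$, and that the switching estimator equals $A^{\mathrm{switch}}_t$ exactly via $Q^{\mathrm{switch}}(s_t,o_{t-1},0)=V^{\text{low}}(s_t,o_{t-1})$ and $Q^{\mathrm{switch}}(s_t,o_{t-1},1)=V^{\text{high}}(s_t)$. Your explicit treatment of the random boundary $b_{k+1}$ (decomposing over $\{b_{k+1}=t'\}$ and using the Markov property after a switch) and your terminal convention $V^{\text{high}}(s_T)=0$ make rigorous two steps the paper leaves implicit: the first under ``law of iterated expectations'', the second inside its high-level telescoping.
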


\begin{theorem}[Informal]
\label{thm:hae_var}
Consider the same Plan--Execute policy $\pi_\theta$ and on-policy rollouts.
Let $A_t^{flat}$ denote the advantage obtained from applying standard flat GAE to the Plan--Execute trajectory, and let $A_t^{\text{HAE}}$ denote the low-level execution advantage produced by HAE. Under simplifying assumptions (e.g., exact value baselines, $\lambda^{\text{low}}=\lambda^{\text{high}}=1$),
\begin{equation*}
\text{Var} \big(A_t^{\text{HAE}}\big)\le\text{Var}\big(A_t^{\text{flat}}\big),
\end{equation*}
with strict inequality whenever subgoals and switching boundaries carry nontrivial
information about future returns beyond the state.
\end{theorem}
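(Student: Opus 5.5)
The plan is to reduce both advantages, under the stated simplifications, to ``return minus a baseline'' and then compare the two baselines as conditional expectations of the same return. With $\lambda^{\text{low}}=\lambda^{\text{high}}=1$ and exact values, the within-segment sum in \eqref{eq:low_gae} telescopes. The bootstrap $V^{\text{next}}_{b_{k+1}-1}=V^{\text{high}}(s_{b_{k+1}})$ from \eqref{eq:vnext} is itself an exact expected return-to-go at the boundary. Hence $A_t^{\text{HAE}}$ reduces to $G^{\text{seg}}_t - V^{\text{low}}(s_t,o_t)$, where
\[
G^{\text{seg}}_t := \sum_{\ell=t}^{b_{k+1}-1}\gamma^{\ell-t}r_\ell + \gamma^{b_{k+1}-t}V^{\text{high}}(s_{b_{k+1}}).
\]
This is the within-segment discounted reward plus the exact boundary value. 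Flat GAE with $\lambda=1$ and the exact flat baseline $V(s_t)=\mathbb{E}[G_t\mid s_t]$ gives $A_t^{\text{flat}}=G_t-V(s_t)$.

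The comparison then proceeds in two steps.

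First, I show that $G^{\text{seg}}_t$ is a conditional expectation of $G_t$. Take $\mathcal{F}$ to be the information generated by the history up to the next boundary $b_{k+1}$, including $s_{b_{k+1}}$ and the switch event $q_{b_{k+1}}=1$. By the Markov property of the factorization \eqref{eq:hrl_factor}, and since a fresh subgoal is drawn at the boundary, $\mathbb{E}[G_{b_{k+1}}\mid\mathcal{F}]=V^{\text{high}}(s_{b_{k+1}})$. Therefore $G^{\text{seg}}_t=\mathbb{E}[G_t\mid\mathcal{F}]$.

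Second, I show that the baselines nest. Since $V^{\text{low}}(s_t,o_t)=\mathbb{E}[G_t\mid s_t,o_t]$, which is coarser than $\mathcal{F}$, both advantages share the mean-zero structure.

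With these two facts, I write
\[
A^{\text{flat}}_t = \bigl(G_t-G^{\text{seg}}_t\bigr) + \bigl(G^{\text{seg}}_t - V^{\text{low}}(s_t,o_t)\bigr) + \bigl(V^{\text{low}}(s_t,o_t)-V(s_t)\bigr).
\]
The three terms are martingale differences along the filtration $\sigma(s_t)\subset\sigma(s_t,o_t)\subset\mathcal{F}\subset\sigma(\tau)$. They are therefore pairwise uncorrelated, and
\[
\mathrm{Var}(A^{\text{flat}}_t)=\mathrm{Var}(A^{\text{HAE}}_t)+\mathbb{E}\bigl[\mathrm{Var}(G_t\mid\mathcal{F})\bigr]+\mathrm{Var}\bigl(V^{\text{low}}(s_t,o_t)-V(s_t)\bigr).
\]
This is a law-of-total-variance (Rao--Blackwell) identity, and it gives the inequality immediately. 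The inequality is strict exactly when one of the last two terms is positive. The last term is positive when the subgoal carries information about the return beyond $s_t$. The middle term is positive when returns after the boundary are not determined by the boundary state, or when the boundary location is informative. This matches the informal strictness condition.

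The main obstacle is making the nesting rigorous. The boundary $b_{k+1}$ is a random stopping time, so I need $\mathcal{F}$ to be a genuine stopped $\sigma$-algebra that contains $\sigma(s_t,o_t)$. This requires conditioning on the event that $t$ lies in segment $k$. I also need the identification $\mathbb{E}[G_{b_{k+1}}\mid\mathcal{F}]=V^{\text{high}}(s_{b_{k+1}})$, which uses the strong Markov property at the switch. I would handle this by conditioning on $(t,k)$ and using that the future after a $\SWITCH$ depends on the past only through $s_{b_{k+1}}$. Episode truncation at $T$ is covered by taking $V^{\text{high}}(s_T)=0$.
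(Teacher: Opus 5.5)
Your proof is correct and is essentially the paper's argument. The paper also identifies the bootstrapped segment return as $\mathbb{E}[G_t\mid\mathcal{W}]$ for a $\sigma$-field $\mathcal{W}\supseteq\sigma(s_t,o_t)$ generated by the within-segment rewards, $b_{k+1}$, $s_{b_{k+1}}$ and $q_{b_{k+1}}$, and then chains two law-of-total-variance inequalities (``boundary bootstrapping'' and ``option-conditioned baseline''); your orthogonal three-term decomposition along $\sigma(s_t)\subset\sigma(s_t,o_t)\subset\mathcal{F}$ merges these into one exact identity, which also justifies the strictness clause that the paper's formal version drops. One small correction to your strictness remark: the middle gap equals $\mathbb{E}\bigl[\gamma^{2(b_{k+1}-t)}\mathrm{Var}(G_{b_{k+1}}\mid s_{b_{k+1}},q_{b_{k+1}}=1)\bigr]$, so it is positive only when post-boundary returns stay random given the boundary state, not because the boundary location is informative (that information already sits inside $\mathcal{F}$).
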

Proof of Theorems~\ref{thm:hae_unbiased} and~\ref{thm:hae_var}, as well as the formal Theorem~\ref{thm:hae_var} are deferred to Appendix~\ref{sec:supp_proof}.
\vspace{-0.3cm}
\section{Experiments}
In this section, we present the empirical evaluation of our method on two agentic tasks: ALFWorld~\citep{ALFWorld20} and WebShop~\citep{yao2022webshop}. Compared to baselines, our HiPER framework demonstrates: (i) superior final task performance in long-horizon, sparse-reward settings; (ii) substantially improved sample efficiency and training stability; and (iii) clear and interpretable subgoals that better structure the agent’s behavior during execution.
% \vspace{-0.2cm}
\subsection{Experimental Setup}
% \vspace{-0.2cm}
\noindent{\bf Tasks.} We train the HiPER agent on two challenging tasks: ALFWorld and WebShop. ALFWorld is an interactive TextWorld environment that generates textual descriptions of the physical world and responds to textual actions by the agent. The agent's task is to complete a given household activity through textual interaction with the environment. ALFWorld contains six task categories of different complexities: Pick \& Place (Pick), Examine in Light (Look), Clean \& Place (Clean), Heat \& Place (Heat), Cool \& Place (Cool), Pick Two \& Place (Pick2). WebShop is an interactive web-based environment that emulates the task of online shopping on an e-commerce website. The goal is to understand the given text instruction and purchase a product to match the user's specifications by interacting with the simulated website.

% \vspace{-0.2cm}
\noindent{\bf Baselines.} We compare HiPER against the following baselines: (i) PPO~\citep{schulman2017proximal}, the standard actor-critic RL method widely used in RLHF and LLM agent training; (ii) RLOO~\citep{ahmadian2024back}, a critic-free RL method with group-based value baseline; (iii) GRPO~\citep{shao2024deepseekmath}, which forms trajectory-level advantage via group-based reward normalization; and (iv) GiGPO~\citep{feng2025group}, a recent agentic RL method that assigns step-level advantages using state-wise grouping and has shown strong empirical performance. As reference, we also report the performance of the base model before RL training.

% \vspace{-0.2cm}
\noindent{\bf Evaluation.} We use Qwen2.5-1.5B-Instruct and Qwen2.5-7B-Instruct \citep{qwen2.5} as base models for RL training. All evaluated RL methods use the same set of hyperparameters provided in Appendix~\ref{sec:supp_hyperparameters}. We adopt the ReAct~\citep{yao2022react} prompt template for all the baseline methods unless otherwise stated, and use our proposed Plan-Execute prompt template for HiPER. Both prompt templates are provided in Appendix~\ref{sec:supp_prompt}. For ALFWorld, we evaluate the success rate across all six task categories and the overall success rate. For WebShop, we evaluate the agent's task score and success rate. For both tasks we set the total training epochs to be 150, following GiGPO, to ensure fair comparison.

% \vspace{-0.2cm}
\subsection{Experimental Results}
\vspace{-0.2cm}
\begin{table*}[t]
\centering
\caption{Performance on ALFWorld and WebShop. We report the success rate (\%) of all six task categories, and the overall success rate for ALFWorld. For WebShop, we report the task score and success rate (\%). Results are averaged over 3 random seeds. $\dagger$ indicates numbers reported in the GiGPO paper~\citep{feng2025group}. The best values are in \textbf{bold}, and second best values are \underline{underlined}.}
\label{tab:alfworld_webshop}
\setlength{\tabcolsep}{6pt}
\renewcommand{\arraystretch}{1.12}
% \begin{small}
\resizebox{\linewidth}{!}{
\begin{tabular}{llllllll|ll}
\toprule
\multirow{2}{*}{Method} &
\multicolumn{7}{c|}{\textbf{ALFWorld}} & \multicolumn{2}{c}{\textbf{WebShop}} \\
& \multicolumn{1}{c}{Pick} & \multicolumn{1}{c}{Look} & \multicolumn{1}{c}{Clean} & \multicolumn{1}{c}{Heat} & \multicolumn{1}{c}{Cool} & \multicolumn{1}{c}{Pick2} & \multicolumn{1}{c}{All} & \multicolumn{1}{c}{Score} & \multicolumn{1}{c}{Succ.} \\
\midrule
\multicolumn{10}{l}{\textit{Qwen2.5-1.5B-Instruct}} \\
Base Model & 15.9~$_{\pm1.8}$ & 13.9~$_{\pm6.7}$ & 11.2~$_{\pm4.1}$ & 3.5~$_{\pm6.
1}$  & 0.0~$_{\pm0.0}$ & 4.2~$_{\pm0.7}$  & 8.3~$_{\pm0.9}$ & 25.1~$_{\pm8.9}$ & 5.5~$_{\pm6.3}$ \\
\quad +PPO & 74.0~$_{\pm9.0}$ & 37.5~$_{\pm21.7}$ & 67.0~$_{\pm5.1}$ & 85.6~$_{\pm17.1}$ & 68.8~$_{\pm3.7}$ & 56.1~$_{\pm12.2}$ & 68.2~$_{\pm1.8}$ & 73.8$^{\dagger}_{\pm3.0}$ & 51.5$^{\dagger}_{\pm2.9}$ \\
\quad +RLOO$^{\dagger}$  & 88.3~$_{\pm3.0}$ & 52.8~$_{\pm8.6}$ & 71.0~$_{\pm5.9}$ & 62.8~$_{\pm8.7}$ & 66.4~$_{\pm5.5}$ & 56.9~$_{\pm4.7}$ & 69.7~$_{\pm2.5}$ & 73.9~$_{\pm5.6}$ & 52.1~$_{\pm6.7}$ \\
\quad +GRPO  & 77.4~$_{\pm6.7}$ & 54.2~$_{\pm7.2}$ & 75.6~$_{\pm8.6}$ & 85.6~$_{\pm17.1}$ & 67.8~$_{\pm7.4}$ & 56.1~$_{\pm11.0}$ & 71.1~$_{\pm8.2}$ & 75.8$^{\dagger}_{\pm3.5}$ & 56.8$^{\dagger}_{\pm3.8}$ \\
\quad +GiGPO$^{\dagger}$ &
\underline{94.4}~$_{\pm5.9}$ & 67.5~$_{\pm4.6}$ & \underline{94.8}~$_{\pm3.8}$ & \textbf{94.4}~$_{\pm7.8}$ & \underline{79.8}~$_{\pm4.7}$ & \underline{76.4}~$_{\pm5.4}$ & \underline{86.7}~$_{\pm1.7}$ & \underline{83.5}~$_{\pm1.8}$ & \underline{67.4}~$_{\pm4.5}$ \\
\rowcolor{gray!15}
\quad+\textbf{HiPER} & \textbf{98.9}~$_{\pm1.9}$ & \textbf{91.7}~$_{\pm14.4}$ & \textbf{97.5}~$_{\pm4.3}$ & \underline{90.9}~$_{\pm4.7}$ & \textbf{96.7}~$_{\pm2.8}$ & \textbf{91.3}~$_{\pm9.6}$ & \textbf{95.3}~$_{\pm1.4}$ & \textbf{85.7}~$_{\pm3.2}$ & \textbf{71.4}~$_{\pm9.0}$ \\
\midrule
\multicolumn{10}{l}{\textit{Qwen2.5-7B-Instruct}} \\
Base Model & 27.6~$_{\pm13.2}$ & 26.4~$_{\pm13.2}$ & 17.5~$_{\pm1.3}$ & 0.0~$_{\pm0.0}$ & 5.4~$_{\pm1.8}$& 4.2~$_{\pm0.3}$ & 14.1~$_{\pm4.1}$ & 46.2$^\dagger$ & 19.5$^\dagger$ \\
\quad +PPO   & \underline{98.0}~$_{\pm2.8}$ & 68.8~$_{\pm8.8}$ & 82.5~$_{\pm3.5}$ & \underline{95.0}~$_{\pm7.1}$ & 52.5~$_{\pm10.6}$ & 75.0~$_{\pm0.0}$ & 82.8~$_{\pm1.1}$ & 81.4$^\dagger_{\pm3.1}$ & 68.7$^\dagger_{\pm5.1}$ \\
\quad +RLOO$^\dagger$  & 87.6~$_{\pm4.3}$ & 78.2~$_{\pm8.3}$ & 87.3~$_{\pm5.8}$ & 81.3~$_{\pm7.6}$ & 71.9~$_{\pm5.2}$ & 48.9~$_{\pm8.4}$ & 75.5~$_{\pm4.6}$ & 80.3~$_{\pm3.2}$ & 65.7~$_{\pm4.0}$ \\
\quad +GRPO & 97.2~$_{\pm2.9}$ & 68.4~$_{\pm5.9}$ & 86.4~$_{\pm6.9}$ & 81.1~$_{\pm10.2}$ & 84.1~$_{\pm4.0}$ & 75.9~$_{\pm8.5}$ & 85.4~$_{\pm2.0}$ & 79.3$^\dagger_{\pm2.8}$ & 66.1$^\dagger_{\pm3.7}$ \\
\quad +GiGPO$^\dagger$ &
97.7~$_{\pm1.6}$ & \underline{82.7}~$_{\pm7.9}$ & \underline{98.8}~$_{\pm1.6}$ & 83.7~$_{\pm7.2}$ & \underline{89.3}~$_{\pm8.2}$ & \underline{79.2}~$_{\pm6.6}$ & \underline{90.8}~$_{\pm1.3}$ & \underline{86.2}~$_{\pm2.6}$ & \underline{75.2}~$_{\pm3.8}$ \\
\rowcolor{gray!15}
\quad+\textbf{HiPER} & \textbf{100}~$_{\pm0.0}$ & \textbf{84.8}~$_{\pm2.6}$ & \textbf{100}~$_{\pm0.0}$ &\textbf{96.3}~$_{\pm6.4}$ & \textbf{100.0}~$_{\pm0.0}$ & \textbf{95.5}~$_{\pm4.4}$ & \textbf{97.4}~$_{\pm1.6}$ & \textbf{92.2}~$_{\pm0.3}$ & \textbf{83.3}~$_{\pm0.9}$ \\
\bottomrule
\end{tabular}
}
% \end{small}
\end{table*}
Table~\ref{tab:alfworld_webshop} presents the main results on ALFWorld and WebShop tasks. We summarize key takeaways below.

% \vspace{-0.2cm}
\noindent{\bf RL training substantially improve model performance. }
For both 1.5B and 7B models, PPO, RLOO, and GRPO substantially improve performance over the base model across both benchmarks, underscoring the importance of RL training for multi-turn interactive tasks. With the 1.5B model, PPO increases ALFWorld overall success from 8.3\% to 68.2\% and WebShop success from 5.5\% to 51.5\% (score 73.8). Similar trends hold for the 7B model, where PPO reaches 82.8\% overall success on ALFWorld and 68.7\% on WebShop; RLOO and GRPO yield similar overall gains. GiGPO further improves performance by leveraging denser step-level signals in addition to the episode outcome, achieving 90.8\% on ALFWorld and 75.2\% on WebShop.

% \vspace{-0.2cm}
\noindent{\bf Baselines struggle most on tasks requiring multiple sequential subtasks.} A consistent pattern in Table~\ref{tab:alfworld_webshop} is that standard RL baselines lag most on ALFWorld task categories whose success require \emph{multiple sequential subtasks}. For example, in Pick Two \& Place (Pick2) task, the agent must retrieve two target objects sequentially and place them in the correct location; and in Examine in Light (Look) task, the agent needs to pick up a desired object, find and turn on a light source and examine the object in a {\it sequential} manner. For 1.5B model, PPO/RLOO/GRPO achieve less than 60\% success rate on Pick2 and Look, as opposed to more than 70\% on simpler tasks such as Pick. Similar pattern can be observed for the stronger baseline GiGPO and on the 7B model as well. These results strongly suggest that single-timescale ``flat" RL optimization is less reliable when tasks are composed of several dependent subgoals. 

% \vspace{-0.2cm}
{\bf HiPER achieves the best overall performance, especially for challenging tasks.} HiPER consistently delivers the strongest performance across both ALFWorld and WebShop tasks, for both model sizes. With Qwen2.5-1.5B, HiPER reaches 95.3\% overall success on ALFWorld and 71.4\% success on WebShop, outperforming the strongest reported baseline GiGPO by {\blue +8.6\%} and {\blue +4.0\%}, respectively. With Qwen2.5-7B, HiPER further improves to 97.4\% on ALFWorld and 83.3\% on WebShop, both exceeding GiGPO by around {\blue +7\%}. Notably, the largest gains on ALFWorld come from the more challenging categories such as Look and Pick2, highlighting the clear advantage of explicitly decomposing complex multi-stage tasks into subgoals and learning with a hierarchical structure.
\begin{figure}
    \begin{subfigure}{0.45\linewidth}
        \centering
    \includegraphics[width=\linewidth]{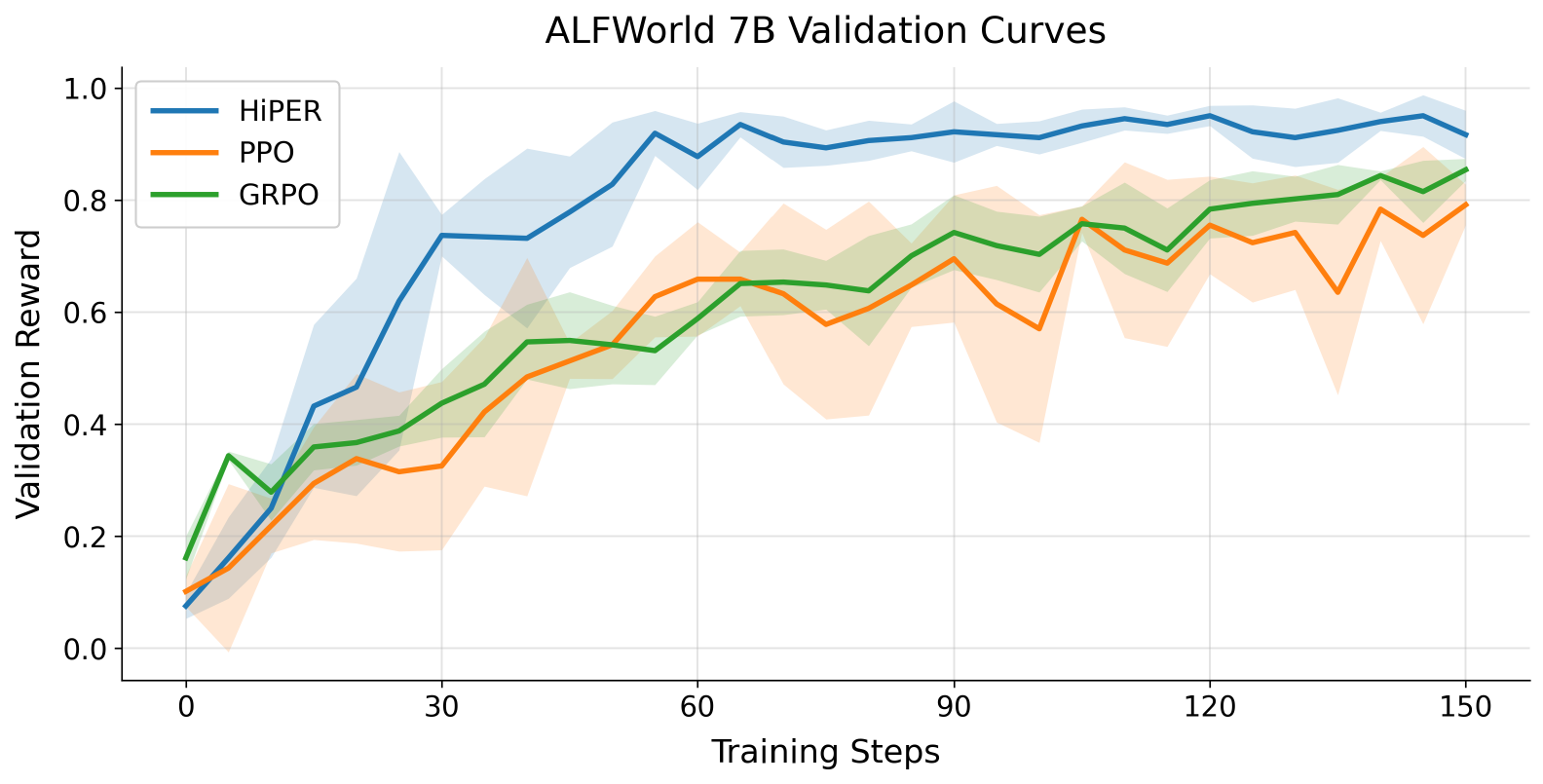}
        \caption{Validation}
        \label{fig:alfworld_7b_val}
    \end{subfigure}
    \hfill
    \begin{subfigure}{0.45\linewidth}
    \label{fig:alfworld_7b_train}
        \centering
        \includegraphics[width=\linewidth]{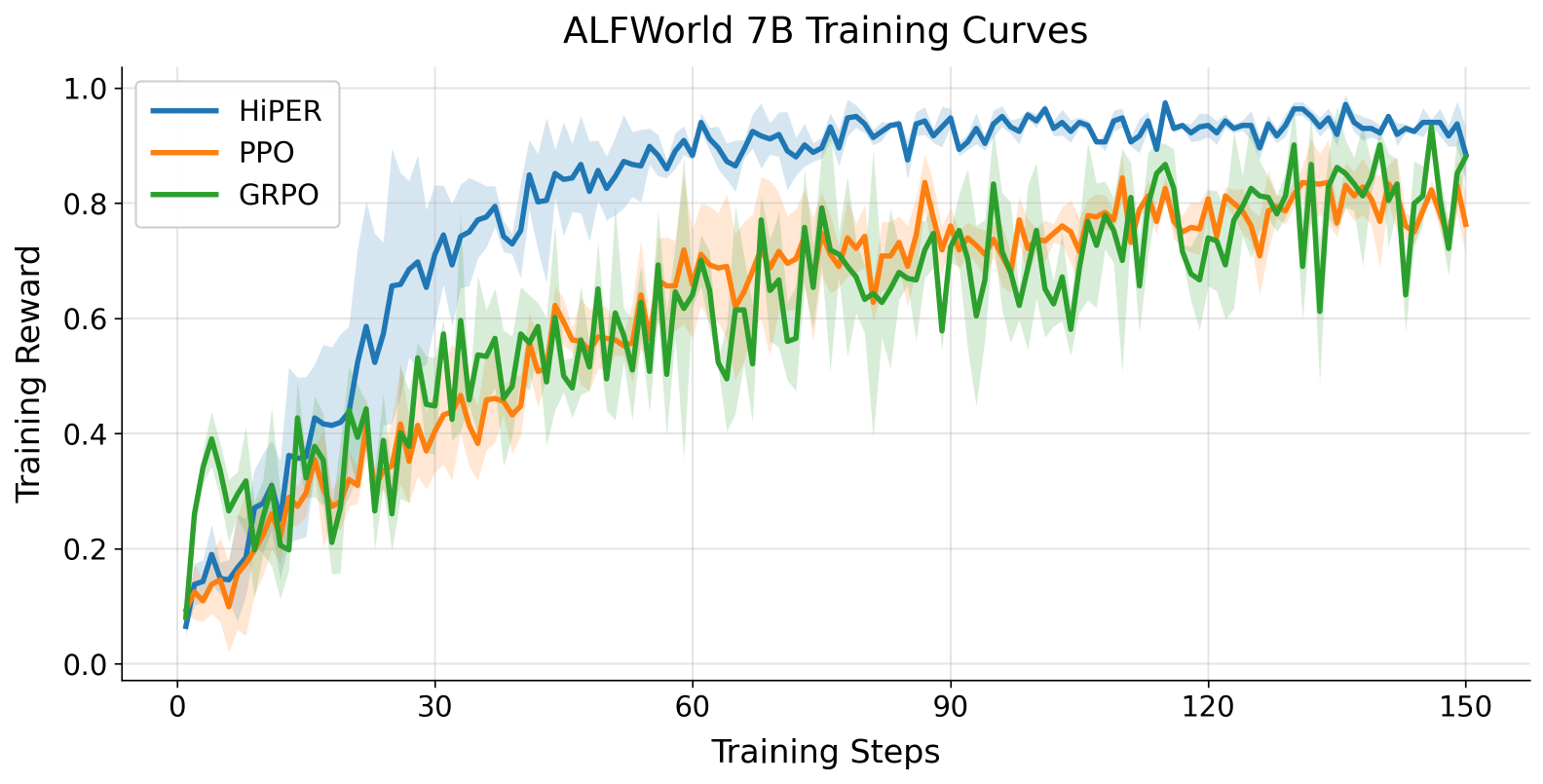}
        \caption{Training}
        \label{fig:alfworld_7b_train}
    \end{subfigure}
    \caption{{\bf ALFWorld 7B Curves.} From the validation curve, HiPER achieves roughly 2.8$\times$ speedup relative to PPO/GRPO. From the training curve, HiPER exhibits more stable training dynamics compared with PPO/GRPO, showing smaller oscillations. }
    \label{fig:alfworld_7b}
\end{figure}

\subsection{Results Analysis}

Fig.~\ref{fig:alfworld_7b_val} compares validation success on ALFWorld over training steps for HiPER, PPO, and GRPO. Across both model sizes, HiPER improves faster and converges higher than the flat baselines. For the 7B model, PPO/GRPO take roughly 140 steps to reach $\sim$80\% success, while HiPER exceeds 80\% in about 50 steps, a $2.8\times$ sample-efficiency gain. A similar trend holds at 1.5B (Appendix~\ref{sec:supp_results}), where HiPER reaches the high-success regime earlier, yielding a $2.5\times$ speedup. Besides faster convergence, HiPER also exhibits more stable learning dynamics throughout training. Fig.~\ref{fig:alfworld_7b_train} shows the training dynamics of HiPER, GRPO averaged over 3 seeds. Across the whole training process, HiPER shows smaller oscillations and fewer sharp regressions in the learning trajectory, especially compared with the critic-free GRPO. This suggests that HiPER’s updates are more reliable in long-horizon settings, where flat baselines are more prone to noisy learning dynamics.

\noindent{\bf Subgoal generation and switching behavior.}
Although HiPER receives no external supervision on subgoals, it learns to generate meaningful subgoals that structure behavior and support task completion. We present representative trajectories of HiPER agents in Appendix~\ref{sec:supp_traj}. In addition, Fig.~\ref{fig:switch} shows the switching diagnostics of HiPER during training in ALFWorld. Overall, it exhibits a two-phase learning process: an initial exploratory phase, where switching happens frequently, followed by a consolidation phase where the agent gradually learns to commit to a subgoal for multiple steps and switching when needed. These observations suggest that HiPER can acquire useful and meaningful high-level planning behavior from outcome-only rewards, rather than collapsing to degenerate strategies such as switching at every turn or no subgoal switches at all.
% \vspace{-1cm}
\begin{figure}
    \centering
    \includegraphics[width=0.45\linewidth]{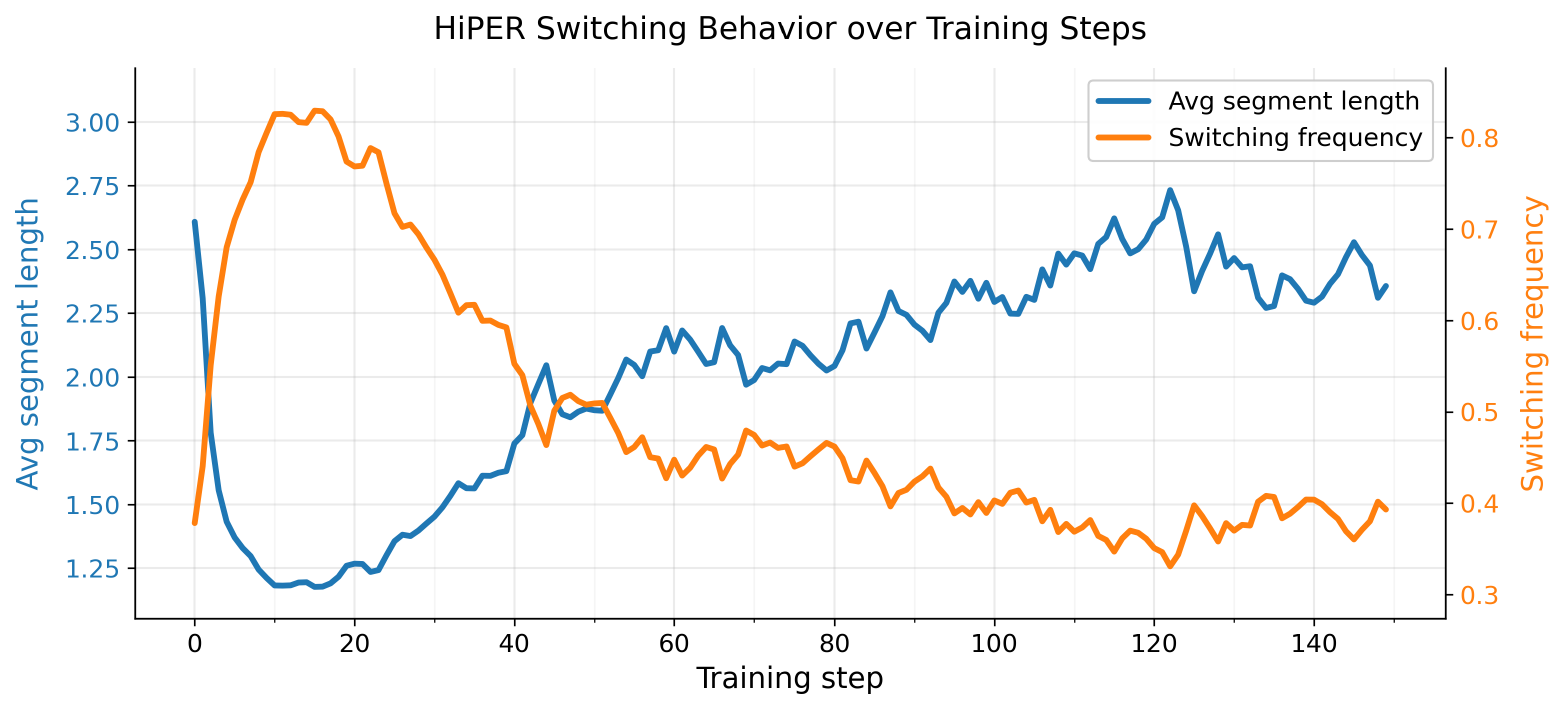}
    \caption{{\bf HiPER Switching Behavior on ALFWorld.} The switching frequency increases during early training, indicating a high-level exploration phase. After initial exploration, the switching frequency and segment length stabilizes.}
    \label{fig:switch}
    % \vspace{-0.7cm}
\end{figure}
% \vspace{-0.2cm}
\subsection{Ablation on Plan-Execute}
\label{sec:pe}
To isolate the effects of Plan-Execute and HAE, we investigate the training performance of baseline methods with Plan-Execute prompting. Specifically, during RL training and evaluation, we only change the system prompts of PPO, GRPO, and GiGPO from ReAct to Plan-Execute, and evaluate these methods on ALFWorld.
\begin{table*}[h]
\centering
\caption{{\bf Comparison of ReAct and Plan-Execute prompting on ALFWorld.} The upper panel reports performance of baselines when trained and evaluated with ReAct prompting; the lower panel reports performance of baselines and HiPER when trained and evaluated with Plan-Execute prompting. The base model used here is Qwen2.5-1.5B-Instruct. We report the success rate (\%) of all six task categories, and the overall success rate. Results are averaged over 3 random seeds. The best values are in \textbf{bold}, and second best values are \underline{underlined}.}
\label{tab:alfworld_pe_ablation}
\setlength{\tabcolsep}{6pt}
\renewcommand{\arraystretch}{1.12}
% \begin{small}
\begin{tabular}{llllllll}
\toprule
\multirow{2}{*}{Method} &
\multicolumn{7}{c}{\textbf{ALFWorld}} \\
& \multicolumn{1}{c}{Pick} & \multicolumn{1}{c}{Look} & \multicolumn{1}{c}{Clean} & \multicolumn{1}{c}{Heat} & \multicolumn{1}{c}{Cool} & \multicolumn{1}{c}{Pick2} & \multicolumn{1}{c}{All} \\
\midrule
% \multicolumn{8}{l}{\textit{Qwen2.5-1.5B-Instruct}} \\
Base Model (ReAct) & 15.9~$_{\pm1.8}$ & 13.9~$_{\pm6.7}$ & 11.2~$_{\pm4.1}$ & 3.5~$_{\pm6.
1}$  & 0.0~$_{\pm0.0}$ & 4.2~$_{\pm0.7}$ & 8.3~$_{\pm0.9}$ \\
\quad +PPO & 74.0~$_{\pm9.0}$ & 37.5~$_{\pm21.7}$ & 67.0~$_{\pm5.1}$ & 85.6~$_{\pm17.1}$ & 68.8~$_{\pm3.7}$ & 56.1~$_{\pm12.2}$ & 68.2~$_{\pm1.8}$ \\
\quad +GRPO  & 77.4~$_{\pm6.7}$ & 54.2~$_{\pm7.2}$ & 75.6~$_{\pm8.6}$ & 85.6~$_{\pm17.1}$ & 67.8~$_{\pm7.4}$ & 56.1~$_{\pm11.0}$ & 71.1~$_{\pm8.2}$\\
\quad +GiGPO$^{\dagger}$ & 94.4~$_{\pm5.9}$ & 67.5~$_{\pm4.6}$ & 94.8~$_{\pm3.8}$ & \textbf{94.4}~$_{\pm7.8}$ & \underline{79.8}~$_{\pm4.7}$ & 76.4~$_{\pm5.4}$ & 86.7~$_{\pm1.7}$ \\
\midrule
Base Model (Plan-Execute) & 3.4~$_{\pm3.3}$ & 10.4~$_{\pm3.6}$ & 3.0~$_{\pm2.6}$ & 0.0~$_{\pm0.0}$  & 0.0~$_{\pm0.0}$ & 1.3~$_{\pm2.3}$ & 2.9~$_{\pm1.6}$ \\
\quad +PPO & 89.6~$_{\pm6.9}$ & 73.5~$_{\pm4.4}$ & 86.9~$_{\pm13.5}$ & 85.3~$_{\pm6.6}$ & 72.2~$_{\pm6.9}$ & 68.6~$_{\pm3.8}$ & 81.3~$_{\pm4.7}$ \\
\quad +GRPO & 83.5~$_{\pm4.0}$ & 54.2~$_{\pm7.2}$ & 60.8~$_{\pm13.8}$ & 80.0~$_{\pm10.0}$ & 62.9~$_{\pm9.7}$ & 51.6~$_{\pm12.8}$ & 69.8~$_{\pm6.4}$ \\
\quad +GiGPO & \textbf{99.1}~$_{\pm1.5}$ & \underline{83.3}~$_{\pm14.9}$ & \textbf{98.8}~$_{\pm2.1}$ & 81.0~$_{\pm4.1}$ & 76.8~$_{\pm11.8}$ & \textbf{92.9}~$_{\pm0.1}$ & \underline{91.1}~$_{\pm3.6}$ \\
\rowcolor{gray!15}
\quad+\textbf{HiPER} & \underline{98.9}~$_{\pm1.9}$ & \textbf{91.7}~$_{\pm14.4}$ & \underline{97.5}~$_{\pm4.3}$ & \underline{90.9}~$_{\pm4.7}$ & \textbf{96.7}~$_{\pm2.8}$ & \underline{91.3}~$_{\pm9.6}$ & \textbf{95.3}~$_{\pm1.4}$  \\
\bottomrule
\end{tabular}
% \end{small}
\end{table*}
From Table~\ref{tab:alfworld_pe_ablation}, applying Plan-Execute on the base model without RL training harms the model performance, reducing the initial overall success rate from 8.3\% to 2.9\%, possibly due to the base model's limited ability to follow instructions. However, training the model with Plan-Execute improves the final performance in general. We observe considerable improvements from PPO with ReAct to PPO with Plan-Execute (+13.1\%), and GiGPO with ReAct to GiGPO Plan-Execute (+4.4\%). In addition, HiPER remain the best method overall, with a +4.2\% advantage over GiGPO with Plan-Execute.

\section{Conclusion}
% \vspace{-0.1cm}
We introduce HiPER, a novel hierarchical RL framework for training LLM agents. HiPER explicitly separate high-level planning and low-level execution via a Plan-Execute interface, and optimizes its hierarchical policy with a matching hierarchical advantage estimator. By coupling within-segment credit assignment with boundary-to-boundary progress signals, HiPER delivers more stable learning and higher success rates on interactive benchmarks. These results suggest that explicitly modeling and optimizing the multi-timescale structure of agent behavior is a key ingredient for scaling RL to reliably train LLM agents on truly long-horizon tasks with sparse feedback.

\section*{Acknowledgments}
We appreciate support
from the following sources: M. Hong and J. Peng are supported in part by NSF grants IIS-2435820 and CIF-2414372, as well as a Cisco Research Award. M. Hong and A. Garcia are supported in part by NSF grant CIF-2414373.

\bibliography{hiper_ref}
\bibliographystyle{icml2026}

%%%%%%%%%%%%%%%%%%%%%%%%%%%%%%%%%%%%%%%%%%%%%%%%%%%%%%%%%%%%%%%%%%%%%%%%%%%%%%%
%%%%%%%%%%%%%%%%%%%%%%%%%%%%%%%%%%%%%%%%%%%%%%%%%%%%%%%%%%%%%%%%%%%%%%%%%%%%%%%
% APPENDIX
%%%%%%%%%%%%%%%%%%%%%%%%%%%%%%%%%%%%%%%%%%%%%%%%%%%%%%%%%%%%%%%%%%%%%%%%%%%%%%%
%%%%%%%%%%%%%%%%%%%%%%%%%%%%%%%%%%%%%%%%%%%%%%%%%%%%%%%%%%%%%%%%%%%%%%%%%%%%%%%
\newpage
\appendix
\onecolumn
\section{Proofs}
\label{sec:supp_proof}
\subsection{Proof of Theorem~\ref{thm:gradient}}
\label{sec:supp_proof_gradient}
\begin{theorem}
Assume the Plan-Execute policy is given by the conditionals
$\pi_\theta(q_t \mid s_t,o_{t-1})$, $\pi_\theta(o_t \mid s_t)$ (invoked only when $q_t=1$), and
$\pi_\theta(a_t \mid s_t,o_t)$. Let $G_t:=\sum_{t'}^{T-1}\gamma^{t'-t}r_{t'}$ denote the return-to-go, then the gradient of~\eqref{eq:hrl_obj} is
{\small
\begin{equation}
% \label{eq:options_grad}
\nabla_\theta J(\theta)=\mathbb{E}_{x\sim p(X)}\mathbb{E}_{\tau\sim\pi_\theta}\bigg[
\sum_{t=0}^{T-1}\Big(\nabla_\theta \log \pi_\theta(q_t \mid s_t,o_{t-1}) \, A^{\mathrm{switch}}_t
 + q_t\,\nabla_\theta \log \pi_\theta(o_t \mid s_t) A^{\mathrm{high}}_t
+\nabla_\theta \log \pi_\theta(a_t \mid s_t,o_t)A^{\mathrm{low}}_t
\Big)\bigg],
\end{equation}
}
where the advantages are defined by:
{\small
\[
\begin{aligned}
A^{\mathrm{switch}}_t &:= \underbrace{\mathbb{E}[G_t \mid s_t,o_{t-1},q_t]}_{Q^{\mathrm{switch}}(s_t,o_{t-1},q_t)}
-\underbrace{\mathbb{E}[G_t \mid s_t,o_{t-1}]}_{V^{\mathrm{switch}}(s_t,o_{t-1})},\\
A^{\mathrm{high}}_t &:= \underbrace{\mathbb{E}[G_t \mid s_t, q_t=1 , o_t]}_{Q^{\mathrm{high}}(s_t,o_t)}
-\underbrace{\mathbb{E}[G_t \mid s_t,q_t=1]}_{V^{\mathrm{high}}(s_t)},\\
A^{\mathrm{low}}_t &:= \underbrace{\mathbb{E}[G_t \mid s_t,o_t,a_t]}_{Q^{\mathrm{low}}(s_t,o_t,a_t)}
-\underbrace{\mathbb{E}[G_t \mid s_t,o_t]}_{V^{\mathrm{low}}(s_t,o_t)}.
\end{aligned}
\]
}
\end{theorem}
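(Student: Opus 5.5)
The plan is the standard likelihood-ratio (REINFORCE) argument applied to the factorization~\eqref{eq:hrl_factor}, followed by a causality step and a baseline-subtraction step carried out separately for each of the three decision types.

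\textbf{Step 1: score function of a trajectory.} Write $J(\theta)=\mathbb{E}_x\sum_\tau \pi_\theta(\tau\mid x)R(\tau)$ and use $\nabla\pi=\pi\nabla\log\pi$. The environment kernel $p(s_{t+1}\mid s_t,a_t)$ and the deterministic copy $\mathbf{1}[o_t=o_{t-1}]$ do not depend on $\theta$. Because $q_t\in\{0,1\}$, the subgoal factor equals $\pi_\theta(o_t\mid s_t)$ when $q_t=1$ and a constant otherwise. Hence
\[
\nabla_\theta\log\pi_\theta(\tau\mid x)=\sum_{t=0}^{T-1}\Big(\nabla_\theta\log\pi_\theta(q_t\mid s_t,o_{t-1})+q_t\nabla_\theta\log\pi_\theta(o_t\mid s_t)+\nabla_\theta\log\pi_\theta(a_t\mid s_t,o_t)\Big).
\]
Here I treat $T$ as either fixed or a stopping time determined by the states; this does not affect the argument.

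\textbf{Step 2: causality.} Multiply each score term at time $t$ by $R(\tau)=\sum_{t'}\gamma^{t'}r_{t'}$. Rewards with $t'<t$ are measurable with respect to the history $\mathcal{H}$ preceding that decision, and each score term has zero conditional mean given its own conditioning history. For example, $\mathbb{E}[\nabla\log\pi(q_t\mid s_t,o_{t-1})\mid\mathcal{H}]=\nabla\sum_q\pi(q\mid\cdot)=0$. So past rewards drop out, and we may replace $R$ by $\gamma^tG_t$. I will follow the paper's convention (as in~\eqref{eq:options_grad}) of absorbing or omitting the $\gamma^t$ prefactor; with $\gamma=1$ it is exact.

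\textbf{Step 3: conditioning and baselines, one decision at a time.} For the switch term, condition on $(s_t,o_{t-1},q_t)$. By the tower property, $G_t$ can be replaced by $Q^{\mathrm{switch}}(s_t,o_{t-1},q_t)$. Subtracting $V^{\mathrm{switch}}(s_t,o_{t-1})$ leaves the expectation unchanged, since the score has zero mean given $(s_t,o_{t-1})$.

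For the subgoal term, the factor $q_t$ restricts to the event $q_t=1$. Condition on $(s_t,q_t=1,o_t)$ to obtain $Q^{\mathrm{high}}$. The baseline $V^{\mathrm{high}}(s_t)=\mathbb{E}[G_t\mid s_t,q_t=1]$ is valid because $\mathbb{E}[\nabla\log\pi(o_t\mid s_t)\mid s_t,q_t=1]=0$, as $o_t\sim\pi_\theta(\cdot\mid s_t)$ on that event.

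For the action term, condition on $(s_t,o_t,a_t)$ to obtain $Q^{\mathrm{low}}$, and subtract $V^{\mathrm{low}}(s_t,o_t)$.

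\textbf{Main obstacle.} The delicate point is the Markov issue: the conditional expectations $\mathbb{E}[G_t\mid\cdot]$ must depend only on the displayed variables, not on the full history. I would handle this by arguing that, under~\eqref{eq:hrl_factor}, the pair $(s_t,o_{t-1})$ is a sufficient statistic for the future. Subsequent switching, subgoal and action draws and transitions depend only on the current state and subgoal, so the future law given the history equals the future law given these variables.

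Some care is also needed for the switch baseline. The score of $q_t$ must be taken conditional on $(s_t,o_{t-1})$, and it must be checked that the tower step over intermediate variables ($o_t$, $a_t$) sits correctly inside the expectation. Once the Markov property is in place, each step is one application of the tower rule.
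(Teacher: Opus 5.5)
Your proposal is correct and follows the paper's proof essentially step for step. Both proofs take the likelihood-ratio gradient of the factorized trajectory density, remove prefix rewards via the score-function identity, apply the tower property to pass to $Q^{\mathrm{switch}}$, $Q^{\mathrm{high}}$, $Q^{\mathrm{low}}$, and subtract the matching $V$ baselines. You correctly note that causality actually produces $\gamma^t G_t$, so the displayed formula is exact only for $\gamma=1$; the paper's proof glosses over this when it replaces $R_\tau$ by $G_t$. Conversely, the Markov property you worry about is not needed: each score term is a function of exactly the variables being conditioned on, so the tower step applies directly.
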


\begin{proof}
First we define $Q$ functions and $V$ functions at different levels:
\begin{equation}
\label{eq:value}
\begin{aligned}
&Q^{\text{high}}(s_t,o_t):=\mathbb{E}[G_t \mid s_t, q_t=1 , o_t], \qquad V^{\text{high}}(s_t):=\mathbb{E}[G_t \mid s_t,q_t=1],\\
&Q^{\text{low}}(s_t,o_t,a_t):=\mathbb{E}[G_t \mid s_t,o_t,a_t], \quad\ \ \ \quad V^{\text{low}}(s_t,o_t):=\mathbb{E}[G_t \mid s_t,o_t],\\
&Q^{\text{switch}}(s_t,o_{t-1},q_t):=\mathbb{E}[G_t \mid s_t,o_{t-1},q_t],\ V^{\text{switch}}(s_t,o_{t-1}):=\mathbb{E}[G_t \mid s_t,o_{t-1}].
\end{aligned}
\end{equation}
Recall the objective:
\[
\begin{aligned}
    J(\theta)&=\mathbb{E}_{x\sim p(X)}\mathbb{E}_{\tau\sim\pi_\theta}[\sum_{t=0}^{T-1}\gamma^tr_t]\\
    &=\mathbb{E}_{x\sim p(X)}\left[\sum_{\tau}p_\theta(\tau\mid x) \left(\sum_{t=0}^{T-1}\gamma^tr_t\right)\right].
\end{aligned}
\]
From the identity $\nabla_\theta p_\theta(x) \;=\; p_\theta(x)\,\nabla_\theta \log p_\theta(x)
$, we have
\begin{equation}
\label{eq:logprob}
\nabla_{\theta} J(\theta)=\mathbb{E}_{x\sim p(X)}\mathbb{E}_{\tau\sim\pi_\theta}\left[
\nabla_\theta \log p_\theta(\tau\mid x)\sum_{t=0}^{T-1}\gamma^tr_t
\right].
\end{equation}

Following the policy factorization in~\eqref{eq:hrl_factor} and its realization by the LLM $\pi_\theta$, we have
\begin{equation}
\label{eq:probdensity}
\begin{aligned}
p_\theta(\tau\mid x)
&= p(s_0\mid x) \prod_{t=0}^{T-1}\Big[\pi_\theta(q_t\mid s_t,o_{t-1})\big((1-q_t)\mathbf{1}[o_t=o_{t-1}]+q_t\pi_\theta(o_t\mid s_t)\big)\pi_\theta(a_t\mid s_t,o_t)p(s_{t+1},r_t\mid s_t,a_t)\Big]\\
&= p(s_0\mid x)\prod_{t=0}^{T-1}
\Big[
\pi_\theta(q_t\mid s_t,o_{t-1})\;
\pi_\theta(o_t\mid s_t)^{q_t}\;
\pi_\theta(a_t\mid s_t,o_t)\;
p(s_{t+1},r_t\mid s_t,a_t)
\Big],
\end{aligned}
\end{equation}
where $\mathbf{1}[\cdot]$ is the indicator function, and $p(s_{t+1},r_t\mid s_t,a_t)$ is the environment transition.

Let $R_\tau:=\sum_{t=0}^{T-1}\gamma^t r_t$, plug \eqref{eq:probdensity} into \eqref{eq:logprob},
\begin{equation}
\label{eq:grad_factor}
\begin{aligned}
    \nabla_\theta J(\theta)
&=\mathbb{E}_{x}\mathbb{E}_{\tau}\left[
\sum_{t=0}^{T-1}\Big(
\nabla_\theta \log \pi_\theta(q_t\mid s_t,o_{t-1})
+ q_t \nabla_\theta \log \pi_\theta(o_t\mid s_t)
+ \nabla_\theta \log \pi_\theta(a_t\mid s_t,o_t)
\Big)R_\tau
\right] \\
&=\mathbb{E}_{x}\mathbb{E}_{\tau}\left[
\sum_{t=0}^{T-1}\Big(
\nabla_\theta \log \pi_\theta(q_t\mid s_t,o_{t-1})
+ q_t \nabla_\theta \log \pi_\theta(o_t\mid s_t)
+ \nabla_\theta \log \pi_\theta(a_t\mid s_t,o_t)
\Big)G_t
\right]\\
&=\mathbb{E}_{x}\mathbb{E}_{\tau}\left[\sum_{t=0}^{T-1}\nabla_\theta \log \pi_\theta(q_t\mid s_t,o_{t-1})G_t\right]+\mathbb{E}_{x}\mathbb{E}_{\tau}\left[\sum_{t=0}^{T-1}q_t \nabla_\theta \log \pi_\theta(o_t\mid s_t)G_t\right] \\
&\qquad\qquad\qquad+\mathbb{E}_{x}\mathbb{E}_{\tau}\left[\sum_{t=0}^{T-1}\nabla_\theta\log \pi_\theta(a_t\mid s_t,o_t)G_t\right],
\end{aligned}
\end{equation}
we can replace $R_\tau$ by $G_t$ because the prefix return $\sum_{k=0}^{t-1}\gamma^k r_k$ is independent of decisions at time $t$, and from the score function identity: $\mathbb{E}_{z\sim\pi_\theta(\cdot\mid c)}[\nabla_\theta \log \pi_\theta(z\mid c)\,b(c)]=0$ for any $b(c)$ independent of $z$, so the prefix term vanishes in expectation.

Consider each term in~\eqref{eq:grad_factor} separately, first,
\[
\begin{aligned}
\mathbb{E}\left[\sum_{t=0}^{T-1}\nabla_\theta \log \pi_\theta(q_t\mid s_t,o_{t-1})\,G_t\right]
&=\mathbb{E}\left[\sum_{t=0}^{T-1}\nabla_\theta \log \pi_\theta(q_t\mid s_t,o_{t-1})
\mathbb{E}[G_t\mid s_t,o_{t-1},q_t]\right]\\
&=\mathbb{E}\left[\sum_{t=0}^{T-1}\nabla_\theta \log \pi_\theta(q_t\mid s_t,o_{t-1})\,
Q^{\mathrm{switch}}(s_t,o_{t-1},q_t)\right] \qquad \text{(by definition of}\ Q^{\text{switch}}\text{)}.
\end{aligned}
\]
Similarly,
{\small
\[
\mathbb{E}\left[\sum_{t=0}^{T-1}q_t\,\nabla_\theta \log \pi_\theta(o_t\mid s_t)\,G_t\right]
=\mathbb{E}\left[\sum_{t=0}^{T-1}q_t\,\nabla_\theta \log \pi_\theta(o_t\mid s_t)\,
\mathbb{E}[G_t\mid s_t,q_t=1,o_t]\right]
=\mathbb{E}\left[\sum_{t=0}^{T-1}q_t\,\nabla_\theta \log \pi_\theta(o_t\mid s_t)\,
Q^{\mathrm{high}}(s_t,o_t)\right],
\]
}
and,
{\small
\[
\mathbb{E}\left[\sum_{t=0}^{T-1}\nabla_\theta \log \pi_\theta(a_t\mid s_t,o_t)\,G_t\right]
=\mathbb{E}\left[\sum_{t=0}^{T-1}\nabla_\theta \log \pi_\theta(a_t\mid s_t,o_t)\,
\mathbb{E}[G_t\mid s_t,o_t,a_t]\right]
=\mathbb{E}\left[\sum_{t=0}^{T-1}\nabla_\theta \log \pi_\theta(a_t\mid s_t,o_t)\,
Q^{\mathrm{low}}(s_t,o_t,a_t)\right].
\]
}
Using the score function identity again: $\mathbb{E}_{z\sim\pi_\theta(\cdot\mid c)}[\nabla_\theta \log \pi_\theta(z\mid c)\,b(c)]=0$ for any $b(c)$ independent of $z$,
we can subtract:
\[
V^{\mathrm{switch}}(s_t,o_{t-1})=\mathbb{E}[G_t\mid s_t,o_{t-1}],\quad
V^{\mathrm{high}}(s_t)=\mathbb{E}[G_t\mid s_t,q_t=1],\quad
V^{\mathrm{low}}(s_t,o_t)=\mathbb{E}[G_t\mid s_t,o_t],
\]
from the corresponding $Q$ terms without changing the expectation. This yields
{\small
\[
\nabla_\theta J(\theta)=\mathbb{E}_{x\sim p(X)}\mathbb{E}_{\tau\sim\pi_\theta}\bigg[
\sum_{t=0}^{T-1}\Big(
\nabla_\theta \log \pi_\theta(q_t \mid s_t,o_{t-1}) \, A^{\mathrm{switch}}_t
+ q_t\,\nabla_\theta \log \pi_\theta(o_t \mid s_t)\,A^{\mathrm{high}}_t
+\nabla_\theta \log \pi_\theta(a_t \mid s_t,o_t)\,A^{\mathrm{low}}_t
\Big)\bigg].
\]
}
The proof is completed.
\end{proof}

\subsection{Proof of Theorem~\ref{thm:hae_unbiased}}
\label{sec:supp_proof_unbiased}
Theorem~\ref{thm:hae_unbiased} states that the gradient estimator obtained from the HAE process in Section~\ref{sec:hae} is an unbiased estimator of~\eqref{eq:options_grad} up GAE bootstrapping and critic approximation errors. To see this, we consider two different gradient estimators and compare them with the true gradient $\nabla_\theta J(\theta)$: 

First, $\hat g(\theta)$, which is the HAE gradient estimator, obtained by plugging $\{\hat A^{\text{switch}}_t,\hat A^{\text{high}}_{b_k},\hat A^{\text{low}}_t\}$ from~\eqref{eq:low_gae}, \eqref{eq:high_gae} and~\eqref{eq:term_adv}, with \emph{learned} critics $ V_\phi^{\text{low}}$ and $V_\phi^{\text{high}}$:
\begin{equation}
\hat g(\theta)=\bigg[
\sum_{t=0}^{T-1}\Big(\nabla_\theta \log \pi_\theta(q_t \mid s_t,o_{t-1}) \, \hat A^{\mathrm{switch}}_t + q_t\,\nabla_\theta \log \pi_\theta(o_t \mid s_t) \hat A^{\mathrm{high}}_t  +\nabla_\theta \log \pi_\theta(a_t \mid s_t,o_t)\hat A^{\mathrm{low}}_t
\Big)\bigg],
\end{equation}
where $\{\hat A^{\text{switch}}_t,\hat A^{\text{high}}_{b_k},\hat A^{\text{low}}_t\}$ are obtained from~\eqref{eq:low_gae}, \eqref{eq:high_gae} and~\eqref{eq:term_adv}, with learned critics $ V_\phi^{\text{low}}$ and $V_\phi^{\text{high}}$; 

Second, $g^\lambda (\theta)$, which is the corresponding oracle estimator of~\eqref{eq:options_grad}, obtained by using the same formulas in~\eqref{eq:low_gae}, \eqref{eq:high_gae}, and \eqref{eq:term_adv} but with the \emph{true} value functions $V^{\text{low}},V^{\text{high}}$:
\begin{equation}
g^\lambda(\theta)=\bigg[
\sum_{t=0}^{T-1}\Big(\nabla_\theta \log \pi_\theta(q_t \mid s_t,o_{t-1}) \, A^{\mathrm{switch}}_{\lambda,t} + q_t\,\nabla_\theta \log \pi_\theta(o_t \mid s_t) \hat A^{\mathrm{high}}_{\lambda,t}+\nabla_\theta \log \pi_\theta(a_t \mid s_t,o_t)\hat A^{\mathrm{low}}_{\lambda,t}
\Big)\bigg],
\end{equation}
where $\{A^{\text{switch}}_{\lambda,t}, A^{\text{high}}_{\lambda,b_k}, A^{\text{low}}_{\lambda,t}\}$ are obtained from~\eqref{eq:low_gae}, \eqref{eq:high_gae} and~\eqref{eq:term_adv}, with true value functions $ V^{\text{low}}$ and $V^{\text{high}}.$

\begin{theorem}
    The HAE gradient estimator $\hat g(\theta)$ is an unbiased gradient estimator up to GAE bootstrapping and critic approximation errors, that is,
    {\small
    \begin{equation}
    \label{eq:bias_decomp_grad}
    \mathbb{E}\big[\hat g(\theta)\big] - \nabla_\theta J(\theta)=\underbrace{\Big(\mathbb{E}\big[g^{\lambda}(\theta)\big]-\nabla_\theta J(\theta)\Big)}_{\text{GAE bootstrapping bias}}+\underbrace{\mathbb{E}\big[\hat g(\theta)-g^{\lambda}(\theta)\big]}_{\text{critic approximation bias}}.
    \end{equation}
    }
    In particular, when $\lambda^{\text{low}}=\lambda^{\text{high}}=1$ (Monte Carlo estimation, $\text{GAE bootstrapping bias =0}$) and $V_\phi^{\text{low}}=V^{\text{low}}, V_\phi^{\text{high}}=V^{\text{high}}$ (critics are perfectly learned, $\text{critic approximation bias=0}$), $\mathbb{E}[\hat g(\theta)]=\nabla_\theta J(\theta)$, i.e. $\hat g(\theta)$ is an unbiased stochastic gradient estimator of $\nabla_\theta J(\theta)$.
\end{theorem}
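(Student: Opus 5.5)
The identity \eqref{eq:bias_decomp_grad} is purely algebraic: add and subtract $\mathbb{E}[g^{\lambda}(\theta)]$ and use linearity of expectation. All the content lies in the ``in particular'' claim. When the critics equal the true values, $\hat g=g^{\lambda}$ pathwise, so the critic term vanishes. It then remains to show that $\mathbb{E}[g^{\lambda}(\theta)]=\nabla_\theta J(\theta)$ when $\lambda^{\text{low}}=\lambda^{\text{high}}=1$.

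For that I compare $g^{\lambda}$ term by term with \eqref{eq:options_grad} from Theorem~\ref{thm:gradient}. For each decision $z\in\{q_t,o_t,a_t\}$ with conditioning context $c$, I use the tower property:
\[
\mathbb{E}\bigl[\nabla_\theta\log\pi_\theta(z\mid c)\,\hat A\bigr]=\mathbb{E}\bigl[\nabla_\theta\log\pi_\theta(z\mid c)\,\mathbb{E}[\hat A\mid c,z]\bigr].
\]
So it suffices to show $\mathbb{E}[\hat A\mid c,z]=A_t$ for the corresponding true advantage, up to a function of $c$ alone; such a remainder is killed by the score-function identity already used in the proof of Theorem~\ref{thm:gradient}. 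Throughout, I assume the Markov property implicit there: conditional on $(s_t,o_t)$, or on $(s_t,q_t=1)$, the future is independent of the past. I also take $V^{\text{high}}(s_T)=0$ at termination, and identify the boundaries $b_k$, $k\ge1$, with the times at which $q_t=1$.

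\emph{Low level.} With $\lambda^{\text{low}}=1$ the within-segment sum \eqref{eq:low_gae} telescopes to
\[
\hat A^{\text{low}}_t=\sum_{\ell=t}^{b_{k+1}-1}\gamma^{\ell-t}r_\ell+\gamma^{b_{k+1}-t}V^{\text{high}}(s_{b_{k+1}})-V^{\text{low}}(s_t,o_k).
\]
At $b_{k+1}$ we have $q=1$, so $V^{\text{high}}(s_{b_{k+1}})=\mathbb{E}[G_{b_{k+1}}\mid s_{b_{k+1}},q_{b_{k+1}}=1]$. A tower argument at this random boundary then gives $\mathbb{E}[\hat A^{\text{low}}_t\mid s_t,o_t,a_t]=Q^{\text{low}}-V^{\text{low}}=A^{\text{low}}_t$. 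This step is the main obstacle. The boundary $b_{k+1}$ is a stopping time determined by the policy's own switch decisions, so I would decompose on the events $\{b_{k+1}=m\}$, which are measurable up to time $m$. On each such event I would apply the Markov property to replace the bootstrapped value by the conditional expectation of the remaining return.

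\emph{High level.} With $\lambda^{\text{high}}=1$, the products of $\tilde\gamma_i$ in \eqref{eq:high_gae} combine with the $\tilde r_j$ to reproduce the discounted return from $b_k$. The high-level values telescope, leaving $\hat A^{\text{high}}_{b_k}=G_{b_k}-V^{\text{high}}(s_{b_k})$. Conditioning on $(s_{b_k},q=1,o_k)$ gives $Q^{\text{high}}-V^{\text{high}}$. The factor $q_t$ in \eqref{eq:options_grad} restricts exactly to the boundary indices, so the sums match.

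\emph{Switching.} This case is an exact identity. If $q_t=1$ then $Q^{\text{switch}}=V^{\text{high}}(s_t)$. If $q_t=0$ then $o_t=o_{t-1}$, so $Q^{\text{switch}}=V^{\text{low}}(s_t,o_{t-1})$. Hence
\[
V^{\text{switch}}=\beta_tV^{\text{high}}+(1-\beta_t)V^{\text{low}},
\]
and therefore $A^{\text{switch}}_t=(q_t-\beta_t)\,\delta^{\text{switch}}_t=\hat A^{\text{switch}}_t$ pathwise when the values are exact. Summing the three matched terms recovers \eqref{eq:options_grad}, which completes the argument.
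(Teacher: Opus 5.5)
Your proposal is correct and follows essentially the same route as the paper. The decomposition comes from adding and subtracting $\mathbb{E}[g^{\lambda}(\theta)]$; then, with exact critics and $\lambda^{\text{low}}=\lambda^{\text{high}}=1$, you telescope the low-level sum (bootstrapped to $V^{\text{high}}(s_{b_{k+1}})$) and the high-level sum (to $G_{b_k}-V^{\text{high}}(s_{b_k})$), condition, and add the exact pathwise identity $A^{\text{switch}}_t=(q_t-\beta_t)\,\delta^{\text{switch}}_t$. Your explicit use of the Markov property, of $b_{k+1}$ as a policy-dependent stopping time, of the convention $V^{\text{high}}(s_T)=0$, and of which indices carry $q_t=1$ makes rigorous the steps the paper justifies only by the law of iterated expectations.
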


\begin{proof}
The decomposition in~\eqref{eq:bias_decomp_grad} is straightforward by linearity of expectation:
\begin{align*}
\mathbb{E}\big[\hat g(\theta)\big] - \nabla_\theta J(\theta) &= \mathbb{E}\big[\hat g(\theta)\big] - \nabla_\theta J(\theta) + \mathbb{E}[g^\lambda(\theta)] -\mathbb{E}[g^\lambda(\theta)] \\
&={\Big(\mathbb{E}\big[g^{\lambda}(\theta)\big]-\nabla_\theta J(\theta)\Big)}+
    \mathbb{E}\big[\hat g(\theta)-g^{\lambda}(\theta)\big].
\end{align*}

Next we show unbiasedness when $\lambda^{\text{low}}=\lambda^{\text{high}}=1$ and critics are perfectly learned, by verifying that, under these conditions, each advantage estimator $\hat A^{\text{low}}_t, \hat A^{\text{high}}_{t},\hat A^{\text{switch}}_t$ is unbiased. 

Recall $0 = b_0 < b_1 < \cdots < b_K = T$ are the switching boundary indices, where $T$ is the episode length and $K$ is the number of segments, and let the $k$-th segment be the time interval $t \in [b_k, b_{k+1}-1]$ during which subgoal $o_k$ is active.

\noindent {\bf Low-level} ($\hat A^{\text{low}}_t$). For step $t$ within segment $k$, i.e.\ $b_k \le t < b_{k+1}$, when $\lambda^{\text{low}}=1$, and $V_\phi^{\text{high}}=V^{\text{high}}$ and $ V_\phi^{\text{low}}=V^{\text{low}}$, from~\eqref{eq:low_gae}, we have
\begin{equation}
\label{eq:adv_low}
    \hat A_t^{\text{low}}
=\sum_{\ell=t}^{b_{k+1}-1}\gamma^\ell \delta_{\ell}^{\text{low}},
\end{equation}

Recall the low-level TD residual is defined in~\eqref{eq:low_gae} as:
{\small
\begin{equation*}
    \delta^{\text{low}}_t
= r_t + \gamma \, V_t^{\text{next}}
        - V^{\text{low}}(s_t, o_k),\
t \in [b_k, b_{k+1}-1],
\end{equation*}
}
where
% \vspace{-0.3cm}
\begin{equation*}
V^{\text{next}}_t =
\begin{cases}
V^{\text{high}}(s_{b_{k+1}}), & t=b_{k+1}-1 \\
V^{\text{low}}(s_{t+1}, o_k), & \text{otherwise}.
\end{cases}
% \vspace{-0.2cm}
\end{equation*}

Telescoping the sum in~\eqref{eq:adv_low} and by definition of low-level TD residual, we obtain
\[
\hat A_t^{\text{low}}=
\sum_{\ell=t}^{b_{k+1}-1}\gamma^{\ell-t} r_\ell
+\gamma^{b_{k+1}-t}V^{\text{high}}(s_{b_{k+1}})
- V^{\text{low}}(s_t,o_t).
\]
Taking expectation, and by definition of $V^\text{high}$ and $V^\text{low}$ in~\eqref{eq:value},
\begin{align*}
    \mathbb{E}[\hat A_t^{\text{low}}\mid s_t, o_t, a_t]&=\mathbb{E}\left[\sum_{\ell=t}^{b_{k+1}-1}\gamma^{\ell-t} r_\ell \mid s_t, o_t, a_t\right] + \mathbb{E}[\gamma^{b_{k+1}-t}\mathbb{E}[G_{b_{k+1}} \mid s_{b_{k+1}},q_{b_{k+1}}=1] \mid s_t, o_t, a_t ]-\mathbb{E}[G_t\mid s_t,o_t] \\
    &=\mathbb{E}\left[\sum_{\ell=t}^{b_{k+1}-1}\gamma^{\ell-t} r_\ell \ +\ \gamma^{b_{k+1}-t}\mathbb{E}[G_{b_{k+1}} \mid s_{b_{k+1}},q_{b_{k+1}}=1]\mid s_t, o_t, a_t\right]-\mathbb{E}[G_t \mid s_t,o_t] \\
    &=\mathbb{E}[G_t \mid s_t,o_t,a_t]-\mathbb{E}[G_t \mid s_t,o_t] \\
    &=A_t^{\text{low}}
\end{align*}
where the third equality is by law of iterated expectations.

\noindent {\bf High-level} ($\hat A^{\text{high}}_{b_k}$).
For the $k$-th segment boundary time $b_k$ (i.e., $q_{b_k}=1$), when $\hat V^{\text{high}}=V^{\text{high}}$, the high-level GAE in~\eqref{eq:high_gae} reduces to a sum of high-level TD residuals along the boundary-indexed process:
\begin{equation}
\label{eq:adv_high}
    \hat A^{\text{high}}_{b_k}
=\sum_{j=k}^{K-1}\Big(\gamma^{\,b_j-b_k}\Big)\,\delta^{\text{high}}_{b_j},
\end{equation}

Recall the high-level TD residual is
\[
\delta^{\text{high}}_{b_j}
:= R_{b_j} + \gamma^{\,b_{j+1}-b_j} V^{\text{high}}(s_{b_{j+1}}) - V^{\text{high}}(s_{b_j}),
\qquad
R_{b_j}:=\sum_{u=b_j}^{b_{j+1}-1}\gamma^{u-b_j}r_u .
\]

Telescoping the sum in~\eqref{eq:adv_high} yields
\[
\hat A^{\text{high}}_{b_k}
=
\sum_{j=k}^{K-1}\gamma^{\,b_j-b_k} R_{b_j}
- V^{\text{high}}(s_{b_k})
=
\sum_{u=b_k}^{T-1}\gamma^{u-b_k} r_u
- V^{\text{high}}(s_{b_k})
=
G_{b_k}-V^{\text{high}}(s_{b_k}).
\]

Taking expectation and by definition of $V^{\text{high}}$ in~\eqref{eq:value},
\begin{align*}
\mathbb{E}[\hat A^{\text{high}}_{b_k}\mid s_{b_k},q_{b_k}=1,o_{b_k}]
&=\mathbb{E}[G_{b_k}\mid s_{b_k},q_{b_k}=1,o_{b_k}]
-\mathbb{E}[G_{b_k}\mid s_{b_k},q_{b_k}=1] \\
&= Q^{\text{high}}(s_{b_k},o_{b_k})-V^{\text{high}}(s_{b_k})
= A^{\text{high}}_{b_k}.
\end{align*}

\noindent {\bf Switching} ($\hat A^{\text{switch}}_t$).
Recall the switching gain is
\[
\delta^{\text{switch}}_t := V^{\text{high}}(s_t)-V^{\text{low}}(s_t,o_{t-1}),
\qquad
\beta_t:=\pi_\theta(q_t=1\mid s_t,o_{t-1}),
\]
and the estimator in \eqref{eq:term_adv} is
\[
\hat A^{\text{switch}}_t=(q_t-\beta_t)\,\delta^{\text{switch}}_t .
\]

Under the Plan-Execute semantics at the same state $s_t$:
continuing ($q_t=0$) keeps option $o_{t-1}$, while terminating ($q_t=1$) returns to the high-level process, hence
\[
Q^{\text{switch}}(s_t,o_{t-1},0)=V^{\text{low}}(s_t,o_{t-1}),\qquad
Q^{\text{switch}}(s_t,o_{t-1},1)=V^{\text{high}}(s_t).
\]
Therefore
\[
V^{\text{switch}}(s_t,o_{t-1})
=(1-\beta_t)V^{\text{low}}(s_t,o_{t-1})+\beta_t V^{\text{high}}(s_t).
\]

Now consider the true switching advantage:
\begin{align*}
A^{\text{switch}}_t
&:=Q^{\text{switch}}(s_t,o_{t-1},q_t)-V^{\text{switch}}(s_t,o_{t-1})\\
&=\Big((1-q_t)V^{\text{low}}(s_t,o_{t-1})+q_t V^{\text{high}}(s_t)\Big)
-\Big((1-\beta_t)V^{\text{low}}(s_t,o_{t-1})+\beta_t V^{\text{high}}(s_t)\Big)\\
&=(q_t-\beta_t)\big(V^{\text{high}}(s_t)-V^{\text{low}}(s_t,o_{t-1})\big)\\
&=(q_t-\beta_t)\delta^{\text{switch}}_t
=\hat A^{\text{switch}}_t.
\end{align*}
Hence, under perfect critics, $\hat A^{\text{switch}}_t$ equals the true switching advantage.

Combining the above three unbiased advantages and summing over $t$ yields
\[
\mathbb{E}[\hat g(\theta)]
=\mathbb{E}\left[\sum_{t=0}^{T-1}\Big(
\nabla_\theta \log \pi_\theta(q_t\mid s_t,o_{t-1})\,A_t^{\text{switch}}
+ q_t\,\nabla_\theta \log \pi_\theta(o_t\mid s_t)\,A_t^{\text{high}}
+ \nabla_\theta \log \pi_\theta(a_t\mid s_t,o_t)\,A_t^{\text{low}}
\Big)\right]=\nabla_\theta J(\theta).
\]
The proof is completed.
\end{proof}
\subsection{Formal Theorem and Proof of Theorem~\ref{thm:hae_var}}
\label{sec:supp_proof_var}
\begin{theorem}
\label{thm:hae_var_formal}
Under a fixed Plan-Execute policy $\pi_\theta$ and its on-policy rollout distribution. Let $G_t:=\sum_{t'=t}^{T-1}\gamma^{t'-t}r_{t'}$ be the return-to-go. For step $t$ within segment $k$, i.e.\ $b_k \le t < b_{k+1}$, during which subgoal $o_k$ is active, consider two advantage estimation schemes below:
\begin{itemize}
    \item {\bf Advantage from flat GAE.} $A_t^{\text{flat}}:=\sum_{\ell=t}^{T-1}(\gamma\lambda^{\text{flat}})^{\ell-t}\delta^{\text{flat}}_\ell,$ where $\delta^{\text{flat}}_\ell := r_\ell+\gamma \hat V^{\text{flat}}(s_{\ell+1})-\hat V^{\text{flat}}(s_\ell),$ and $V^{\text{flat}}$ is a state-only critic. In particular, when $\lambda^{\text{flat}}=1$ and $\hat V^{\text{flat}}(s_t)=V^{\text{flat}}(s_t):=\mathbb{E}[G_t\mid s_t]$, $A_t^{\text{flat}}=G_t-V^{\text{flat}}(s_t)$.
    \item {\bf Low-level Advantage from HAE.} $A^{\text{low}}_t$, as defined in~\eqref{eq:low_gae}, with critics $\hat V^{\text{high}}$ and $\hat V^{\text{low}}$. In particular, when $\lambda^{\text{low}}=1$ and $\hat V^{\text{high}}(s_t)=V^{\text{high}}(s_t)$, $\hat V^{\text{low}}(s_t,o_t)=V^{\text{low}}(s_t,o_t)$, $A^{\text{low}}_t=\bar G_t-V^{\text{low}}(s_t,o_t)$, where $\bar G_t:=\sum_{\ell=t}^{b_{k+1}-1}\gamma^{\ell-t}r_\ell + \gamma^{b_{k+1}-t}V^{\text{high}}(s_{b_{k+1}})$.
\end{itemize}
Note that both advantage estimation schemes operate under the Plan-Execute policy.

{\bf Claim.} Assume $\mathbb{E}[G_t^2]<\infty$, when $\lambda^{\text{flat}}=\lambda^{\text{high}}=\lambda^{\text{low}}=1$, and values baselines are exact, i.e., $\hat V^{\text{flat}}(s_t)=V^{\text{flat}}(s_t)$, $\hat V^{\text{low}}(s_t,o_t)=V^{\text{low}}(s_t,o_t)$, $\hat V^{\text{high}}(s_t)=V^{\text{high}}(s_t)$, then for any fixed time index $t$,
\begin{equation*}
\text{Var}\big(A_t^{\text{low}}\big)\le\text{Var}\big(A_t^{\text{flat}}\big).
\end{equation*}
\end{theorem}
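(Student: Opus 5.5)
We reduce both advantages to explicit random variables and compare them through the law of total variance. Under the stated hypotheses ($\lambda=1$, exact critics), the telescoping computation already used in the proof of Theorem~\ref{thm:hae_unbiased} gives
\begin{equation*}
A_t^{\text{flat}} = G_t - V^{\text{flat}}(s_t),\qquad A_t^{\text{low}} = \bar G_t - V^{\text{low}}(s_t,o_t).
\end{equation*}
Both have mean zero, because $\mathbb{E}[\bar G_t\mid s_t,o_t]=V^{\text{low}}(s_t,o_t)$ by the tower property. So the claim is equivalent to $\mathbb{E}[(A_t^{\text{low}})^2]\le \mathbb{E}[(A_t^{\text{flat}})^2]$.

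First, we decompose $A_t^{\text{flat}}$ into orthogonal pieces. Let $\mathcal F$ denote the history up to the boundary $b_{k+1}$: the states, subgoals, actions and rewards through $s_{b_{k+1}}$, together with the switch decision $q_{b_{k+1}}=1$. We then write
\begin{equation*}
G_t - V^{\text{flat}}(s_t) = \underbrace{\big(G_t-\bar G_t\big)}_{X} + \underbrace{\big(\bar G_t - V^{\text{low}}(s_t,o_t)\big)}_{A_t^{\text{low}}} + \underbrace{\big(V^{\text{low}}(s_t,o_t)-V^{\text{flat}}(s_t)\big)}_{Y}.
\end{equation*}
Here $X=\gamma^{b_{k+1}-t}\big(G_{b_{k+1}}-V^{\text{high}}(s_{b_{k+1}})\big)$. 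By the definition of $V^{\text{high}}$ and the Markov structure of the factorization~\eqref{eq:hrl_factor}, we get $\mathbb{E}[X\mid\mathcal F]=0$.

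Second, we verify the cross terms vanish. Since $A_t^{\text{low}}$ and $Y$ are $\mathcal F$-measurable and $\mathbb{E}[X\mid\mathcal F]=0$, we have $\mathbb{E}[XA_t^{\text{low}}]=\mathbb{E}[XY]=0$. Since $Y$ is $(s_t,o_t)$-measurable and $\mathbb{E}[A_t^{\text{low}}\mid s_t,o_t]=0$, we have $\mathbb{E}[A_t^{\text{low}}Y]=0$. Square-integrability follows from $\mathbb{E}[G_t^2]<\infty$ together with Jensen's inequality applied to the conditional expectations. Hence
\begin{equation*}
\mathrm{Var}(A_t^{\text{flat}})=\mathrm{Var}(A_t^{\text{low}})+\mathbb{E}[X^2]+\mathbb{E}[Y^2]\ge \mathrm{Var}(A_t^{\text{low}}).
\end{equation*}
The inequality is strict exactly when $G_{b_{k+1}}$ is not a.s.\ equal to $V^{\text{high}}(s_{b_{k+1}})$, or when $o_t$ carries information about the return beyond $s_t$. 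This matches the informal strictness condition in Theorem~\ref{thm:hae_var}.

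The main obstacle is making $\mathbb{E}[X\mid\mathcal F]=0$ rigorous when the boundary $b_{k+1}$ is random. The plan is to treat $b_{k+1}$ as a stopping time with respect to the trajectory filtration, since it is determined by the switch decisions $q$. We then use the strong Markov property of the hierarchical process: conditional on the stopped history and $q_{b_{k+1}}=1$, the future depends only on $s_{b_{k+1}}$, and $V^{\text{high}}(s)=\mathbb{E}[G\mid s,q=1]$ is time-homogeneous. The case where the segment ends at $T$ is handled by setting $V^{\text{high}}(s_T)=0$ and $X=0$. One also has to check that $V^{\text{low}}(s_t,o_t)$ and $V^{\text{flat}}(s_t)$ are well defined as stationary functions when the segment position varies. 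We will state this as part of the on-policy, Markov-in-$(s,o)$ setting, with any time dependence absorbed into the state.
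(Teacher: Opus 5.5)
Your proposal is correct and is essentially the paper's argument. Your $X$ and $Y$ are exactly the pieces removed by the paper's two channels: boundary bootstrapping, $\mathrm{Var}(\mathbb{E}[Z\mid\mathcal{W}])\le\mathrm{Var}(Z)$, and the option-conditioned baseline, $\mathbb{E}[\mathrm{Var}(G_t\mid s_t,o_t)]\le\mathbb{E}[\mathrm{Var}(G_t\mid s_t)]$. You simply package them as one orthogonal identity $\mathrm{Var}(A_t^{\text{flat}})=\mathrm{Var}(A_t^{\text{low}})+\mathbb{E}[X^2]+\mathbb{E}[Y^2]$, which also gives the strictness condition directly, and your stopping-time/strong-Markov treatment of $b_{k+1}$ justifies the step $\bar G_t=\mathbb{E}[G_t\mid\mathcal{W}]$ that the paper only asserts.
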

\begin{proof}
    We show that HAE reduces variance through two channels: boundary bootstrapping and option-conditioned baseline.
    
    {\bf Boundary bootstrapping.} Define \[
    \mathcal{Z}:=\sigma(s_t,o_t),
    \qquad
    \mathcal{W}:=\sigma\big(\mathcal{Z},\, r_t,\dots,r_{b_{k+1}-1}, b_{k+1}, s_{b_{k+1}},q_{b_{k+1}}\big),
    \]
    so $\mathcal{Z}\subseteq \mathcal{W}$. We have 
    \[
    \bar G_t
    :=R_{t:b_{k+1}-1}+\gamma^{b_{k+1}-t}V^{\text{high}}(s_{b_{k+1}})
    =R_{t:b_{k+1}-1}+\gamma^{b_{k+1}-t}\mathbb{E}[G_{b_{k+1}}\mid s_{b_{k+1}},q_{b_{k+1}}=1]
    =\mathbb{E}[G_t\mid \mathcal{W}],
    \]
    where $R_{t:b_{k+1}-1}:=\sum_{\ell=t}^{b_{k+1}-1}\gamma^{\ell-t}r_\ell$. By law of total expectation,
    \[
    \mathbb{E}[\bar G_t\mid \mathcal{Z}]
    =\mathbb{E}[\mathbb{E}[G_t\mid \mathcal{W}]\mid \mathcal{Z}]
    =\mathbb{E}[G_t\mid \mathcal{Z}].
    \]
    Meanwhile,
    \[
    A_t^{\text{low}}
    =\bar G_t-V^{\text{low}}(s_t,o_t)
    =\mathbb{E}[G_t\mid \mathcal{W}]-\mathbb{E}[G_t\mid \mathcal{Z}]
    \]
    Denote $Z:=G_t-\mathbb{E}[G_t\mid \mathcal{Z}].$ Then $A_t^{\text{low}}=\mathbb{E}\big[Z\mid \mathcal{W}\big]$. 
    
    By law of total variance, we have $\text{Var}(Z)=\mathbb{E}[\text{Var}(Z\mid \mathcal{W})]+\text{Var}(\mathbb{E}[Z\mid \mathcal{W}])$, hence
    \begin{equation}
    \label{eq:boot_var}
    \text{Var}(A_t^{\text{low}})
    =\text{Var}\big(\mathbb{E}[Z\mid \mathcal{W}]\big)
    \le \text{Var}(Z)
    =\text{Var}\big(G_t-\mathbb{E}[G_t\mid s_t,o_t]\big).
    \end{equation}

    {\bf Option-conditioned Baseline.} Let $\mathcal{F}:=\sigma(s_t)$ and recall $\mathcal{Z}=\sigma(s_t,o_t)$. By the conditional-variance identity
    \[
    \text{Var}\big(G_t-\mathbb{E}[G_t\mid \mathcal{H}]\big)=\mathbb{E}\big[\text{Var}(G_t\mid \mathcal{H})\big]
    \qquad\text{for any }\sigma\text{-field }\mathcal{H},
    \]
    we have $\text{Var}\big(G_t-\mathbb{E}[G_t\mid s_t,o_t]\big)=\mathbb{E}[\text{Var}(G_t\mid \mathcal{Z})]$. 
    By law of total variance, 
    \[
    \text{Var}(G_t\mid \mathcal{F})=\mathbb{E}\big[\text{Var}(G_t\mid \mathcal{Z})\mid \mathcal{F}\big]+\text{Var}\big(\mathbb{E}[G_t\mid \mathcal{Z}]\mid \mathcal{F}\big).
    \]
    Taking expectation over $\mathcal{F}$,
    \[
    \mathbb{E}\big[\text{Var}(G_t\mid \mathcal{F})\big]
    =
    \mathbb{E}\big[\text{Var}(G_t\mid \mathcal{Z})\big]
    +
    \mathbb{E}\left[\text{Var}\big(\mathbb{E}[G_t\mid \mathcal{Z}]\mid \mathcal{F}\big)\right]
    \ge \mathbb{E}\big[\text{Var}(G_t\mid \mathcal{Z})\big]
    \]
    Using the conditional-variance identity again,
    \begin{equation}
    \label{eq:baseline_var}
    \text{Var}\big(G_t-\mathbb{E}[G_t\mid s_t,o_t]\big)\le\text{Var}\big(G_t-\mathbb{E}[G_t\mid s_t]\big).
    \end{equation}
    Note that $A_t^{\text{flat}}=G_t-\mathbb{E}[G_t\mid s_t]$,
    combining~\eqref{eq:boot_var} and~\eqref{eq:baseline_var}, it follows that $\text{Var}\big(A_t^{\text{low}}\big)\le\text{Var}\big(A_t^{\text{flat}}\big)$.

\end{proof}
\section{Algorithm Details}
\begin{algorithm}[h]
\caption{Training LLM Agents with HiPER}
\label{alg:hiper}
\begin{algorithmic}[1]
\REQUIRE Initial policy $\pi_{\theta_{\mathrm{old}}}$; two-head critic $V_\phi=\{V_\phi^{\mathrm{high}},V_\phi^{\mathrm{low}}\}$;
task distribution $p(X)$; discount $\gamma$; GAE parameters $\lambda^{\text{high}}, \lambda^{\text{low}}$; clipping $\varepsilon$; KL penalty $\beta$.
\FOR{each training iteration}
    \STATE Update old policy: $\theta_{\mathrm{old}} \leftarrow \theta$
    \STATE \textcolor{gray}{// Plan-Execute rollout}
    \STATE Sample task $x \sim p(X)$ and collect trajectory $\tau=\{(s_t,o_{t-1},q_t,o_t,a_t,r_t,s_{t+1})\}_{t=0}^{T-1}$ by
    \FOR{$t=0$ to $T-1$}
        \STATE Sample $q_t \sim \pi_{\theta_{\mathrm{old}}}(\cdot \mid s_t,o_{t-1})$
        \IF{$q_t=1$} \STATE Sample $o_t \sim \pi_{\theta_{\mathrm{old}}}(\cdot \mid s_t)$
        \ELSE \STATE $o_t \leftarrow o_{t-1}$ \ENDIF
        \STATE Sample $a_t \sim \pi_{\theta_{\mathrm{old}}}(\cdot \mid s_t,o_t)$
        \STATE Execute $a_t$, observe $r_t,s_{t+1}$
    \ENDFOR
    \STATE \textcolor{gray}{// Hierarchical Advantage Estimation and critic targets}
    \STATE Identify segment boundaries $\{b_k\}_{k=0}^{K}$ from switches $\{t:q_t=1\}$ (with $b_0=0$, $b_K=T$)
    \STATE Estimate hierarchical advantages $\{\hat A_t^{\mathrm{high}},\hat A_t^{\mathrm{switch}},\hat A_t^{\mathrm{low}}\}$ via Equations~\eqref{eq:low_gae}--\eqref{eq:term_adv}
    \STATE Compute critic targets $\{y_k^{\mathrm{high}}\}_{k=0}^{K-1}$ and $\{y_t^{\mathrm{low}}\}_{t=0}^{T-1}$ via Equations.~\eqref{eq:high_target}--\eqref{eq:low_target}

    \STATE \textcolor{gray}{// PPO-style update}
    \STATE Form PPO ratios $r_t^{\mathrm{switch}}(\theta), r_t^{\mathrm{high}}(\theta), r_t^{\mathrm{low}}(\theta)$, clipped surrogates $\mathcal{L}_{\mathrm{clip}}^{\mathrm{switch}},\mathcal{L}_{\mathrm{clip}}^{\mathrm{high}},\mathcal{L}_{\mathrm{clip}}^{\mathrm{low}}$
    \STATE $\mathcal{L}_{\mathrm{actor}}(\theta)\leftarrow
    \mathcal{L}_{\mathrm{clip}}^{\mathrm{low}}(\theta)+
    \mathcal{L}_{\mathrm{clip}}^{\mathrm{switch}}(\theta)+
    \mathcal{L}_{\mathrm{clip}}^{\mathrm{high}}(\theta)$
    \STATE $\mathcal{L}_{V}(\phi)\leftarrow
    \sum_{t=0}^{T-1}\!\big(V_\phi^{\mathrm{low}}(s_t,o_t)-y_t^{\mathrm{low}}\big)^2
    +\sum_{k=0}^{K-1}\!\big(V_\phi^{\mathrm{high}}(s_{b_k})-y_k^{\mathrm{high}}\big)^2$
    \STATE Update $(\theta,\phi)$ by minimizing the PPO loss in Eq.~\eqref{eq:ppo_loss}:
    \[
    \min_{\theta,\phi}\;
    -\mathcal{L}_{\mathrm{actor}}(\theta)
    +c_V\mathcal{L}_{V}(\phi)
    -\beta\,\mathbb{D}_{\mathrm{KL}}\!\big(\pi_\theta(\cdot\mid x)\,\|\,\pi_{\mathrm{ref}}(\cdot\mid x)\big).
    \]
\ENDFOR
\end{algorithmic}
\end{algorithm}
\label{sec:supp_ppo_details}
\paragraph{PPO Loss.} Recall the Plan-Execute policy in~\eqref{eq:hrl_factor},
\[
\pi_\theta(q_t,o_t,a_t \mid s_t,o_{t-1})
=
\pi_\theta^{\mathrm{switch}}(q_t\mid s_t,o_{t-1})
\cdot
\Big(\pi_\theta^{\mathrm{high}}(o_t\mid s_t)\Big)^{q_t}
\cdot
\pi_\theta^{\mathrm{low}}(a_t\mid s_t,o_t).
\]

We define the following probability ratios used for PPO clipping,
\[
r_t^{\mathrm{switch}}(\theta)
=
\frac{\pi_\theta^{\mathrm{switch}}(q_t\mid s_t,o_{t-1})}
     {\pi_{\theta_{\mathrm{old}}}^{\mathrm{switch}}(q_t\mid s_t,o_{t-1})},
\qquad
r_t^{\mathrm{high}}(\theta)
=
\frac{\pi_\theta^{\mathrm{high}}(o_t\mid s_t)}
     {\pi_{\theta_{\mathrm{old}}}^{\mathrm{high}}(o_t\mid s_t)},
\qquad
r_t^{\mathrm{low}}(\theta)
=
\frac{\pi_\theta^{\mathrm{low}}(a_t\mid s_t,o_t)}
     {\pi_{\theta_{\mathrm{old}}}^{\mathrm{low}}(a_t\mid s_t,o_t)},
\]
and the clipped surrogate for each decision (switching, subgoal, action):
\[
\mathcal{L}^{z}_{\mathrm{clip}}(\theta)
=
\mathbb{E}\left[\sum_{t=0}^{T-1}
\min\!\left(
r_t^{z}(\theta)\,\hat A_t^{z},\;
\operatorname{clip}_\varepsilon(r_t^{z}(\theta))\,\hat A_t^{z}
\right)
\right],
\qquad z\in\{\mathrm{switch},\mathrm{low}\},
\]
\[
\mathcal{L}^{\mathrm{high}}_{\mathrm{clip}}(\theta)
=
\mathbb{E}\left[\sum_{t=0}^{T-1}
q_t
\min\!\left(
r_t^{\mathrm{high}}(\theta)\,\hat A_t^{\mathrm{high}},\;
\operatorname{clip}_\varepsilon(r_t^{\mathrm{high}}(\theta))\,\hat A_t^{\mathrm{high}}
\right)
\right].
\]
The combined actor loss is therefore
\begin{equation}
\label{eq:actor_loss}
\mathcal{L}_{\mathrm{actor}}(\theta)
=
\mathcal{L}^{\mathrm{low}}_{\mathrm{clip}}(\theta)
+
\mathcal{L}^{\mathrm{switch}}_{\mathrm{clip}}(\theta)
+
\mathcal{L}^{\mathrm{high}}_{\mathrm{clip}}(\theta).
\end{equation}

Assume the two-head critic model is parameterized by $\phi$, given the critic targets $y^\text{high}_k$ and $y^\text{low}_t$ defined in~\eqref{eq:high_target} and ~\eqref{eq:low_target}, 
\begin{equation}
\label{eq:critic_loss}
\mathcal{L}_{V}(\phi)
=
c_V\Big(
\mathbb{E}\left[\sum_{t=0}^{T-1}\big(V_\phi^{\mathrm{low}}(s_t,o_k)- y^\text{low}_t\big)^2\right]
+
\mathbb{E}\left[\sum_{k=0}^{K-1}\big(V_\phi^{\mathrm{high}}(s_{b_{k}})-y^\text{high}_k\big)^2\right]
\Big),
\end{equation}

Combining the actor and critic losses, the overall PPO objective is
\begin{equation}
\label{eq:ppo_loss}
\min_{\theta,\phi}\;
\mathcal{L}(\theta,\phi)
=
\mathbb{E}_{x\sim p(X), \tau\sim\pi_{\theta}}\left[-\mathcal{L}_{\mathrm{actor}}(\theta)
+
c_V\mathcal{L}_{V}(\phi)-\beta \mathbb{D}_{\text{KL}}(\pi_\theta(\cdot\mid x) || \pi_{\text{ref}}(\cdot \mid x)) \right],
\end{equation}
where $\beta$ controls the KL penalty strength to encourage proximity to a reference model.

\paragraph{Reward Design.} In addition to the reward provided in the original environments (e.g. outcome reward $R=10$ for success in ALFWorld and WebShop), we add a format penalty to penalize invalid format generated by the agent. Concretely, if the agent fails to follow the \{\textless think\textgreater...\textless/think\textgreater\textless action\textgreater...\textless/action\textgreater\} (for ReAct prompting) or the \{\textless switch\textgreater...\textless/switch\textgreater\textless subgoal\textgreater...\textless/subgoal\textgreater\textless action\textgreater...\textless/action\textgreater\} (for Plan-Execute prompting) format at turn index $t$, we incur a $0.1$ penalty to $r_t$. Moreover, specific to HiPER, we incur a small penalty $c_t=c_\KEEP(1-q_t)$ to each turn where the agent decides to \KEEP, so as to encourage exploration at the high-level and prevent degenerate behavior, such as committing to a single subgoal for an entire episode. Empirically we find this helpful to boost performance, and results are fairly robust to the choice of $c_\KEEP$, as shown in Appendix~\ref{sec:supp_keep}.

\noindent{}

\section{Implementation Details}
\label{sec:supp_implementation}
\subsection{Hyperparameters}
\label{sec:supp_hyperparameters}
\paragraph{ALFWorld.} All methods use the same hyperparameter settings: a maximum prompt length of 2048 tokens, a maximum response length of 512 tokens, and up to 50 environment steps per episode. Learning rate for the actor is set to 1e-6, and learning rate for the critic is set to 1e-5 (used for PPO and HiPER). For group-based RL (GRPO, RLOO. GiGPO), the group size is set to 8 and 16 distinct groups are sampled per rollout, resulting in 128 environments in total. For HiPER and PPO, we collect rollouts from 128 independent environments. The rollout temperature is 1.0, and the validation temperature is 0.4. We use a mini-batch size of 256 and set the KL-divergence loss coefficient to 0.01. For HiPER, both $\lambda^\text{high}$ and $\lambda^\text{low}$ are set to 0.95 without additional tuning.
\paragraph{WebShop.} All methods use the same hyperparameter settings: a maximum prompt length of 4096 tokens, a maximum response length of 512 tokens, and up to 15 environment steps per episode. Learning rate for the actor is set to 1e-6, and learning rate for the critic is set to 1e-5 (used for PPO and HiPER). For group-based RL (GRPO, RLOO. GiGPO), the group size is set to 8 and 16 distinct groups are sampled per rollout, resulting in 128 environments in total. For HiPER and PPO, we collect rollouts from 128 independent environments. The rollout temperature is 1.0, and the validation temperature is 0.4. We use a mini-batch size of 256 and set the KL-divergence loss coefficient to 0.01. For HiPER, both $\lambda^\text{high}$ and $\lambda^\text{low}$ are set to 0.95 without additional tuning.

\subsection{Memory Overhead}
\label{sec:supp_memory}
HiPER incurs negligible GPU memory overhead relative to standard PPO.
Although HiPER uses two value estimates (high- and low-level), we implement them via a single shared critic backbone with two output heads, rather than training two separate critics.
The extra parameters (and optimizer states) introduced by the additional head are tiny compared to the full model, so overall GPU memory is largely dominated by components shared with PPO, such as the actor/critic backbones, activations, and rollout buffers.
In practice, small additional memory can also arise from non-model factors, e.g., longer effective sequence lengths due to structured Plan--Execute outputs (switch/subgoal/action), increased token-level bookkeeping, or rollout-generation KV caching.
Empirically, under identical training settings and measured by peak GPU memory allocated, HiPER uses around $0.8\%$ more memory per GPU than flat PPO.

\subsection{Prompt Templates}
\label{sec:supp_prompt}
In Table~\ref{tab:alfworld_webshop}, we use the ReAct~\cite{yao2022react} prompt strategy for baselines (Base Model, PPO, RLOO, GRPO, GiGPO), and use Plan-Execute prompt strategy for HiPER.

\noindent {\bf ReAct vs Plan-Execute.} In ReAct prompting, the agent is instructed to reason about the current situation and take an action at each step, effectively operating on a single timescale. After each observation, the agent reasons again from scratch to determine the next action. From a modeling perspective, this stepwise reason-action interleaving makes the agent's global intent largely implicit and inconsistent, which could potentially lead to undesired behaviors such as intent drifting and repetitive actions. From a learning perspective, under the single-timescale structure, the agent must infer implicit long-range dependencies from sparse and delayed feedback, and assign credit along long-horizon action trajectories. Plan–Execute, by contrast, introduces an explicit temporal abstraction: the agent first commits to a high-level plan or subgoal that is meant to persist for multiple steps, and then conditions its low-level actions on that stable subgoal until a deliberate switch/replan boundary is triggered. This separation allows for more effective modeling and more efficient learning for multi-turn agents. For modeling, it externalizes the current global intent (subgoal) that stabilizes behavior, reduces goal drift  across many steps rather than reasoning about the situation over again each turn. For learning, the separation of high-level planning and low-level execution allows for explicit and joint optimization of both planning and execution. In the meantime, this separation provides the structural basis for hierarchical advantage estimation by defining well-formed segments and boundary conditions, allowing learning signals for planning, execution and switching to be computed on their appropriate timescales and coupled in a principled way, enabling more targeted and efficient credit assignment.

\begin{figure}[H]
\centering
\resizebox{\textwidth}{!}{
\begin{tcolorbox}[colback=gray!5!white, colframe=black!75!black, 
title=ReAct Prompt Template for ALFWorld, boxrule=0.3mm, width=\textwidth, arc=3mm, auto outer arc=true]
You are an expert agent operating in the ALFRED embodied Environment. Your task is to: \textcolor{brown}{\{task\_description\}}. Prior to this step, you have already taken \textcolor{brown}{\{step\_count\}} step(s). Below are the most recent \textcolor{brown}{\{history\_length\}} observations and the corresponding actions you took: \textcolor{brown}{\{action\_history\}}. You are now at step \textcolor{brown}{\{current\_step\}} and your current observation is: \textcolor{brown}{\{current\_observation\}}. Your admissible actions of the current situation are: [\textcolor{brown}{\{admissible\_actions\}}].

Now it's your turn to take an action.
You should first reason step-by-step about the current situation. This reasoning process MUST be enclosed within {\textless think\textgreater} {\textless /think\textgreater} tags.
Once you've finished your reasoning, you should choose an admissible action for current step and present it within {\textless action\textgreater} {\textless/action\textgreater} tags.
\end{tcolorbox}
}
\caption{ReAct prompt template of ALFWorld agents.}
\label{prompt:alfworld_react}
\end{figure}

\begin{figure}[H]
\centering
\resizebox{\textwidth}{!}{
\begin{tcolorbox}[colback=gray!5!white, colframe=black!75!black, 
title=Plan-Execute Prompt Template for ALFWorld, boxrule=0.3mm, width=\textwidth, arc=3mm, auto outer arc=true]
You are an expert agent operating in the ALFRED Embodied Environment.
Your overall task is: \textcolor{brown}{\{task\_description\}}

You will complete the task by maintaining a SHORT-TERM subgoal at each step. A subgoal is a small high-level objective that can typically be achieved in a few actions. A subgoal is NOT the full task and NOT a low-level action. At every step, you reconsider your current subgoal based on the latest observation, and may continue it or switch to a new short-term subgoal.

You have already taken \textcolor{brown}{\{step\_count\}} step(s).
Below are the most recent \textcolor{brown}{\{history\_length\}} observations and actions you took:
\textcolor{brown}{\{action\_history\}}. You are now at step \textcolor{brown}{\{current\_step\}} and your current observation is:
\textcolor{brown}{\{current\_observation\}}. Your current subgoal is:
\textcolor{brown}{\{current\_subgoal\}}. Your admissible actions are:
[\textcolor{brown}{\{admissible\_actions\}}].

You MUST output EXACTLY THREE blocks, in the order shown below:

1) A \textless switch\textgreater ... \textless /switch\textgreater\ block (KEEP or SWITCH);
2) A \textless subgoal\textgreater ... \textless /subgoal\textgreater\ block (the subgoal to follow next);
3) A \textless action\textgreater ... \textless /action\textgreater\ block (one admissible action).

Format requirements:

- \textless switch\textgreater\ MUST contain only "KEEP" or "SWITCH".

- \textless subgoal\textgreater\ MUST appear at every step: If you KEEP, copy the exact current subgoal into \textless subgoal\textgreater. If you SWITCH, write a new short subgoal achievable in a few actions and not the entire task.

- \textless action\textgreater\ MUST contain exactly action verbatim copied as is from the admissible actions list.
\end{tcolorbox}
}
\caption{Plan-Execute prompt template of ALFWorld agents.}
\label{prompt:alfworld_pe}
\end{figure}

\begin{figure}[H]
\centering
\resizebox{\textwidth}{!}{
\begin{tcolorbox}[colback=gray!5!white, colframe=black!75!black, 
title=ReAct Prompt Template for WebShop, boxrule=0.3mm, width=\textwidth, arc=3mm, auto outer arc=true]
You are an expert autonomous agent operating in the WebShop e-commerce environment. Your task is to: \textcolor{brown}{\{task\_description\}}. Prior to this step, you have already taken \textcolor{brown}{\{step\_count\}} step(s). Below are the most recent \textcolor{brown}{\{history\_length\}} observations and the corresponding actions you took: \textcolor{brown}{\{action\_history\}}. You are now at step \textcolor{brown}{\{current\_step\}} and your current observation is: \textcolor{brown}{\{current\_observation\}}. Your admissible actions of the current situation are: [\textcolor{brown}{\{admissible\_actions\}}].

Now it's your turn to take an action.
You should first reason step-by-step about the current situation. This reasoning process MUST be enclosed within {\textless think\textgreater} {\textless /think\textgreater} tags.
Once you've finished your reasoning, you should choose an admissible action for current step and present it within {\textless action\textgreater} {\textless/action\textgreater} tags.
\end{tcolorbox}
}
\caption{ReAct prompt template of WebShop agents.}
\label{prompt:webshop_react}
\end{figure}

\begin{figure}[H]
\centering
\resizebox{\textwidth}{!}{
\begin{tcolorbox}[colback=gray!5!white, colframe=black!75!black, 
title=Plan-Execute Prompt Template for WebShop, boxrule=0.3mm, width=\textwidth, arc=3mm, auto outer arc=true]
You are an expert autonomous agent operating in the WebShop e-commerce environment.
Your overall task is: \textcolor{brown}{\{task\_description\}}

You will complete the task by maintaining a SHORT-TERM subgoal at each step. A subgoal is a small high-level objective that can typically be achieved in a few actions. A subgoal is NOT the full task and NOT a low-level action. At every step, you reconsider your current subgoal based on the latest observation, and may continue it or switch to a new short-term subgoal.

You have already taken \textcolor{brown}{\{step\_count\}} step(s).
Below are the most recent \textcolor{brown}{\{history\_length\}} observations and actions you took:
\textcolor{brown}{\{action\_history\}}. You are now at step \textcolor{brown}{\{current\_step\}} and your current observation is:
\textcolor{brown}{\{current\_observation\}}. Your current subgoal is:
\textcolor{brown}{\{current\_subgoal\}}. Your admissible actions are:
[\textcolor{brown}{\{admissible\_actions\}}].

You MUST output EXACTLY THREE blocks, in the order shown below:

1) A \textless switch\textgreater ... \textless /switch\textgreater\ block (KEEP or SWITCH);
2) A \textless subgoal\textgreater ... \textless /subgoal\textgreater\ block (the subgoal to follow next);
3) A \textless action\textgreater ... \textless /action\textgreater\ block (one admissible action).

Format requirements:

- \textless switch\textgreater\ MUST contain only "KEEP" or "SWITCH".

- \textless subgoal\textgreater\ MUST appear at every step: If you KEEP, copy the exact current subgoal into \textless subgoal\textgreater. If you SWITCH, write a new short subgoal achievable in a few actions and not the entire task.

- \textless action\textgreater\ MUST contain exactly action verbatim copied as is from the admissible actions list.
\end{tcolorbox}
}
\caption{Plan-Execute prompt template of WebShop agents.}
\label{prompt:webshop_pe}
\end{figure}

\section{Additional Results}
\label{sec:supp_results}

\subsection{Critic Model Size}
HiPER, like PPO, relies on a learned value function to form low-variance advantage estimates, and therefore requires a separate critic model. This introduces additional GPU memory consumption compared to critic-free baselines such as GRPO and GiGPO. In practice, however, the critic’s prediction task (estimating scalar value targets) is substantially simpler than the actor’s generative task, which suggests that a smaller critic may be sufficient while reducing memory overhead. Here, we test the performance of HiPER with different critic backbone sizes while keeping the actor model size fixed.
\begin{table*}[h]
\centering
\caption{{\bf Comparison of different critic model sizes on ALFWorld.} Rows denote actor-critic size pairs (e.g., HiPER-7B-1.5B uses a 7B actor with a 1.5B critic). We report the success rate (\%) of all six task categories, and the overall success rate. Results are averaged over 3 random seeds. The best values are in \textbf{bold}, and second best values are \underline{underlined}.}
\label{tab:alfworld_critic_ablation}
\setlength{\tabcolsep}{6pt}
\renewcommand{\arraystretch}{1.12}
% \begin{small}
\begin{tabular}{llllllll}
\toprule
\multirow{2}{*}{Method} &
\multicolumn{7}{c}{\textbf{ALFWorld}} \\
& \multicolumn{1}{c}{Pick} & \multicolumn{1}{c}{Look} & \multicolumn{1}{c}{Clean} & \multicolumn{1}{c}{Heat} & \multicolumn{1}{c}{Cool} & \multicolumn{1}{c}{Pick2} & \multicolumn{1}{c}{All} \\
\midrule
HiPER-1.5B-1.5B  & \underline{98.9}~$_{\pm1.9}$ & 91.7~$_{\pm14.4}$ & 97.5~$_{\pm4.3}$ & \underline{90.9}~$_{\pm4.7}$ & \underline{96.7}~$_{\pm2.8}$ & \underline{91.3}~$_{\pm9.6}$ & 95.3~$_{\pm1.4}$  \\
HiPER-1.5B-0.5B  & 97.4~$_{\pm3.7}$ & \textbf{96.4}~$_{\pm5.1}$ & \underline{98.1}~$_{\pm2.6}$ & 78.6~$_{\pm6.4}$ & 85.5~$_{\pm6.4}$ & 39.3~$_{\pm5.1}$ & 87.5~$_{\pm0.6}$  \\
HiPER-7B-7B & \textbf{100}~$_{\pm0.0}$ & 84.8~$_{\pm2.6}$ & \textbf{100}~$_{\pm0.0}$ &\underline{96.3}~$_{\pm6.4}$ & \textbf{100.0}~$_{\pm0.0}$ & \textbf{95.5}~$_{\pm4.4}$ & \textbf{97.4}~$_{\pm1.6}$ \\
HiPER-7B-1.5B & 98.6~$_{\pm2.0}$ & \underline{95.8}~$_{\pm5.9}$ & \textbf{100}~$_{\pm0.0}$ &\textbf{100}~$_{\pm0.0}$ & 92.0~$_{\pm0.0}$ & 88.9~$_{\pm7.9}$ & \underline{96.1}~$_{\pm1.1}$ \\
\bottomrule
\end{tabular}
% \end{small}
\end{table*}

The results in Table~\ref{tab:alfworld_critic_ablation} suggest that HiPER does not require a critic model that strictly matches the actor in size. Using a moderately smaller critic model for HiPER, the performance remains competitive. This could be a potential practical way to reduce large memory overhead induced by the separate critic model.

\subsection{$c_\KEEP$ Sensitivity}
\label{sec:supp_keep}
\begin{table}[h]
\centering
\caption{Sensitivity analysis on $c_\KEEP$ for the 1.5B model on ALFWorld task.}
\label{tab:keep_sensitivity}
\begin{tabular}{lcccccccc}
\toprule
$c_\KEEP$ & 0.0 & 0.1 & 0.2 & 0.3 & 0.4 & 0.5 \\
\midrule
Overall Succ. Rate (\%) & 76.6 & 90.9 & 92.9 & 95.3 & 92.6 & 91.8 \\
\bottomrule
\end{tabular}
\end{table}

Table~\ref{tab:keep_sensitivity} shows that performance is fairly robust to the choice of the \KEEP\ penalty $c_\KEEP$, with a clear improvement over the no-penalty baseline. In particular, introducing a small penalty substantially boosts the overall success rate from $76.6\%$ at $c_\KEEP=0$ to above $90\%$ for all tested values in $[0.1,0.5]$. The best result is achieved at $c_\KEEP=0.3$ ($95.3\%$), while larger penalties slightly degrade performance, suggesting that moderate regularization encourages high-level exploration without overly discouraging \KEEP\ decisions.

\section{Plan-Execute Agent Behavior}
\label{sec:supp_traj}
The ALFWorld trajectories illustrate effective Plan–Execute behavior. For example, in~\ref{sec:supp_alfworld_traj1}, the agent first proposes the subgoal “find and cool a cup,” and then \KEEP s it while exploring plausible locations (countertop 1 $\rightarrow$ countertop 3 $\rightarrow$ countertop 2). After it acquires the cup (“take cup 1 from countertop 2”), it immediately \SWITCH es to the next subgoal and begins the subsequent stage by going to the fridge. Overall, these trajectories show that HiPER agents can synthesize meaningful subgoals online and execute actions persistently conditioned on the current subgoal. The planning behavior is more pronounced on the ALFWorld tasks than on WebShop tasks, since WebShop episodes are significantly shorter. Nevertheless, planning is still evident—for instance, upon opening the T-shirt page, the agent revises its subgoal to prioritize ``bright aqua''.

\subsection{ALFWorld Trajectory 1}
\label{sec:supp_alfworld_traj1}
\noindent
\begin{minipage}[H]{0.9\textwidth}
\centering
\resizebox{0.85\textwidth}{!}{
\begin{tcolorbox}[colback=gray!5!white, colframe=orange!75!black, 
title=Environment (Step 1), boxrule=0.3mm, width=\textwidth, arc=3mm, auto outer arc=true]
-= Welcome to TextWorld, ALFRED! =-

You are in the middle of a room. Looking quickly around you, you see a cabinet 19, a cabinet 18, a cabinet 17, a cabinet 16, a cabinet 15, a cabinet 14, a cabinet 13, a cabinet 12, a cabinet 11, a cabinet 10, a cabinet 9, a cabinet 8, a cabinet 7, a cabinet 6, a cabinet 5, a cabinet 4, a cabinet 3, a cabinet 2, a cabinet 1, a coffeemachine 1, a countertop 3, a countertop 2, a countertop 1, a drawer 4, a drawer 3, a drawer 2, a drawer 1, a fridge 1, a garbagecan 1, a microwave 1, a sinkbasin 1, a stoveburner 4, a stoveburner 3, a stoveburner 2, a stoveburner 1, and a toaster 1. Your task is to: cool some cup and put it in cabinet.
\end{tcolorbox}
}
\end{minipage}

\hfill
\begin{minipage}[H]{0.9\textwidth}
\centering
\resizebox{0.85\textwidth}{!}{
\begin{tcolorbox}[colback=gray!5!white, colframe=blue!75!black, 
title=Agent (Step 1), boxrule=0.3mm, width=\textwidth, arc=3mm, auto outer arc=true]
\switch SWITCH \switchr \subgoal {\blue find and cool a cup} \subgoalr 

\action go to countertop 1\actionr
\end{tcolorbox}
}
\end{minipage}

\begin{minipage}[H]{0.9\textwidth}
\centering
\resizebox{0.85\textwidth}{!}{
\begin{tcolorbox}[colback=gray!5!white, colframe=orange!75!black, 
title=Environment (Step 2), boxrule=0.3mm, width=\textwidth, arc=3mm, auto outer arc=true]
You arrive at countertop 1. On the countertop 1, you see a dishsponge 1, a mug 1, a potato 1, a saltshaker 1, and a tomato 1.
\end{tcolorbox}
}
\end{minipage}

\hfill
\begin{minipage}[H]{0.9\textwidth}
\centering
\resizebox{0.85\textwidth}{!}{
\begin{tcolorbox}[colback=gray!5!white, colframe=blue!75!black, 
title=Agent (Step 2), boxrule=0.3mm, width=\textwidth, arc=3mm, auto outer arc=true]
\switch KEEP \switchr \subgoal find and cool a cup\subgoalr 

\action go to countertop 3\actionr
\end{tcolorbox}
}
\end{minipage}

\begin{minipage}[H]{0.9\textwidth}
\centering
\resizebox{0.85\textwidth}{!}{
\begin{tcolorbox}[colback=gray!5!white, colframe=orange!75!black, 
title=Environment (Step 3), boxrule=0.3mm, width=\textwidth, arc=3mm, auto outer arc=true]
You arrive at countertop 3. On the countertop 3, you see a butterknife 1, a dishsponge 3, a lettuce 2, a pot 1, a saltshaker 2, a spatula 2, and a spatula 1.
\end{tcolorbox}
}
\end{minipage}

\hfill
\begin{minipage}[H]{0.9\textwidth}
\centering
\resizebox{0.85\textwidth}{!}{
\begin{tcolorbox}[colback=gray!5!white, colframe=blue!75!black, 
title=Agent (Step 3), boxrule=0.3mm, width=\textwidth, arc=3mm, auto outer arc=true]
\switch KEEP \switchr\subgoal find and cool a cup\subgoalr

\action go to countertop 2\actionr
\end{tcolorbox}
}
\end{minipage}

\begin{minipage}[H]{0.9\textwidth}
\centering
\resizebox{0.85\textwidth}{!}{
\begin{tcolorbox}[colback=gray!5!white, colframe=orange!75!black, 
title=Environment (Step 4), boxrule=0.3mm, width=\textwidth, arc=3mm, auto outer arc=true]
You arrive at countertop 2. On the countertop 2, you see a bread 1, a cup 1, a fork 1, a spoon 1, and a tomato 2.
\end{tcolorbox}
}
\end{minipage}

\hfill
\begin{minipage}[H]{0.9\textwidth}
\centering
\resizebox{0.85\textwidth}{!}{
\begin{tcolorbox}[colback=gray!5!white, colframe=blue!75!black, 
title=Agent (Step 4), boxrule=0.3mm, width=\textwidth, arc=3mm, auto outer arc=true]
\switch KEEP\switchr \subgoal find and cool a cup\subgoalr 

\action take cup 1 from countertop 2\actionr
\end{tcolorbox}
}
\end{minipage}

\begin{minipage}[H]{0.9\textwidth}
\centering
\resizebox{0.85\textwidth}{!}{
\begin{tcolorbox}[colback=gray!5!white, colframe=orange!75!black, 
title=Environment (Step 5), boxrule=0.3mm, width=\textwidth, arc=3mm, auto outer arc=true]
You pick up the cup 1 from the countertop 2.
\end{tcolorbox}
}
\end{minipage}

\hfill
\begin{minipage}[H]{0.9\textwidth}
\centering
\resizebox{0.85\textwidth}{!}{
\begin{tcolorbox}[colback=gray!5!white, colframe=blue!75!black, 
title=Agent (Step 5), boxrule=0.3mm, width=\textwidth, arc=3mm, auto outer arc=true]
\switch SWITCH \switchr \subgoal {\blue cool the cup and put it in a cabinet} \subgoalr 

\action go to fridge 1\actionr
\end{tcolorbox}
}
\end{minipage}

\begin{minipage}[H]{0.9\textwidth}
\centering
\resizebox{0.85\textwidth}{!}{
\begin{tcolorbox}[colback=gray!5!white, colframe=orange!75!black, 
title=Environment (Step 6), boxrule=0.3mm, width=\textwidth, arc=3mm, auto outer arc=true]
You arrive at fridge 1. The fridge 1 is closed.
\end{tcolorbox}
}
\end{minipage}

\hfill
\begin{minipage}[H]{0.9\textwidth}
\centering
\resizebox{0.85\textwidth}{!}{
\begin{tcolorbox}[colback=gray!5!white, colframe=blue!75!black, 
title=Agent (Step 6), boxrule=0.3mm, width=\textwidth, arc=3mm, auto outer arc=true]
\switch KEEP \switchr \subgoal cool the cup and put it in a cabinet \subgoalr 

\action cool cup 1 with fridge 1\actionr
\end{tcolorbox}
}
\end{minipage}

\begin{minipage}[H]{0.9\textwidth}
\centering
\resizebox{0.85\textwidth}{!}{
\begin{tcolorbox}[colback=gray!5!white, colframe=orange!75!black, 
title=Environment (Step 7), boxrule=0.3mm, width=\textwidth, arc=3mm, auto outer arc=true]
You cool the cup 1 using the fridge 1.
\end{tcolorbox}
}
\end{minipage}

\hfill
\begin{minipage}[H]{0.9\textwidth}
\centering
\resizebox{0.85\textwidth}{!}{
\begin{tcolorbox}[colback=gray!5!white, colframe=blue!75!black, 
title=Agent (Step 7), boxrule=0.3mm, width=\textwidth, arc=3mm, auto outer arc=true]
\switch SWITCH \switchr \subgoal {\blue put the cooled cup in cabinet} \subgoalr

\action go to cabinet 1\actionr
\end{tcolorbox}
}
\end{minipage}

\begin{minipage}[H]{0.9\textwidth}
\centering
\resizebox{0.85\textwidth}{!}{
\begin{tcolorbox}[colback=gray!5!white, colframe=orange!75!black, 
title=Environment (Step 8), boxrule=0.3mm, width=\textwidth, arc=3mm, auto outer arc=true]
You arrive at cabinet 1. The cabinet 1 is closed.
\end{tcolorbox}
}
\end{minipage}

\hfill
\begin{minipage}[H]{0.9\textwidth}
\centering
\resizebox{0.85\textwidth}{!}{
\begin{tcolorbox}[colback=gray!5!white, colframe=blue!75!black, 
title=Agent (Step 8), boxrule=0.3mm, width=\textwidth, arc=3mm, auto outer arc=true]
\switch KEEP\switchr \subgoal put the cooled cup in cabinet\subgoalr

\action open cabinet 1\actionr
\end{tcolorbox}
}
\end{minipage}

\begin{minipage}[H]{0.9\textwidth}
\centering
\resizebox{0.85\textwidth}{!}{
\begin{tcolorbox}[colback=gray!5!white, colframe=orange!75!black, 
title=Environment (Step 9), boxrule=0.3mm, width=\textwidth, arc=3mm, auto outer arc=true]
You arrive at cabinet 1. The cabinet 1 is closed.
\end{tcolorbox}
}
\end{minipage}

\hfill
\begin{minipage}[H]{0.9\textwidth}
\centering
\resizebox{0.85\textwidth}{!}{
\begin{tcolorbox}[colback=gray!5!white, colframe=blue!75!black, 
title=Agent (Step 9), boxrule=0.3mm, width=\textwidth, arc=3mm, auto outer arc=true]
\switch KEEP\switchr \\
\subgoal put the cooled cup in cabinet\subgoalr

\action open cabinet 1\actionr
\end{tcolorbox}
}
\end{minipage}

\begin{minipage}[H]{0.9\textwidth}
\centering
\resizebox{0.85\textwidth}{!}{
\begin{tcolorbox}[colback=gray!5!white, colframe=orange!75!black, 
title=Environment (Step 10), boxrule=0.3mm, width=\textwidth, arc=3mm, auto outer arc=true]
You open the cabinet 1. The cabinet 1 is open. In it, you see nothing.
\end{tcolorbox}
}
\end{minipage}

\hfill
\begin{minipage}[H]{0.9\textwidth}
\centering
\resizebox{0.85\textwidth}{!}{
\begin{tcolorbox}[colback=gray!5!white, colframe=blue!75!black, 
title=Agent (Step 10), boxrule=0.3mm, width=\textwidth, arc=3mm, auto outer arc=true]
\switch KEEP \switchr \subgoal put the cooled cup in the cabinet\subgoalr

\action move cup 1 to cabinet 1\actionr
\end{tcolorbox}
}
\end{minipage}
\subsection{ALFWorld Trajectory 2}
\noindent
\begin{minipage}[H]{0.9\textwidth}
\centering
\resizebox{0.85\textwidth}{!}{
\begin{tcolorbox}[colback=gray!5!white, colframe=orange!75!black, 
title=Environment (Step 1), boxrule=0.3mm, width=\textwidth, arc=3mm, auto outer arc=true]
-= Welcome to TextWorld, ALFRED! =-

You are in the middle of a room. Looking quickly around you, you see a cabinet 4, a cabinet 3, a cabinet 2, a cabinet 1, a coffeemachine 1, a countertop 1, a diningtable 3, a diningtable 2, a diningtable 1, a drawer 1, a fridge 1, a garbagecan 1, a microwave 1, a sidetable 1, a sinkbasin 1, a stoveburner 4, a stoveburner 3, a stoveburner 2, a stoveburner 1, and a toaster 1. Your task is to: put a clean knife in sidetable.
\end{tcolorbox}
}
\end{minipage}

\hfill
\begin{minipage}[H]{0.9\textwidth}
\centering
\resizebox{0.85\textwidth}{!}{
\begin{tcolorbox}[colback=gray!5!white, colframe=blue!75!black, 
title=Agent (Step 1), boxrule=0.3mm, width=\textwidth, arc=3mm, auto outer arc=true]
\switch SWITCH\switchr \subgoal {\blue find a knife} \subgoalr \action go to diningtable 3\actionr
\end{tcolorbox}
}
\end{minipage}

\begin{minipage}[H]{0.9\textwidth}
\centering
\resizebox{0.85\textwidth}{!}{
\begin{tcolorbox}[colback=gray!5!white, colframe=orange!75!black, 
title=Environment (Step 2), boxrule=0.3mm, width=\textwidth, arc=3mm, auto outer arc=true]
You arrive at diningtable 3. On the diningtable 3, you see a cup 2, a lettuce 1, a peppershaker 1, a potato 3, a spatula 3, and a spoon 3.
\end{tcolorbox}
}
\end{minipage}

\hfill
\begin{minipage}[H]{0.9\textwidth}
\centering
\resizebox{0.85\textwidth}{!}{
\begin{tcolorbox}[colback=gray!5!white, colframe=blue!75!black, 
title=Agent (Step 2), boxrule=0.3mm, width=\textwidth, arc=3mm, auto outer arc=true]
\switch KEEP\switchr \subgoal find a knife\subgoalr

\action go to diningtable 2\actionr
\end{tcolorbox}
}
\end{minipage}

\begin{minipage}[H]{0.9\textwidth}
\centering
\resizebox{0.85\textwidth}{!}{
\begin{tcolorbox}[colback=gray!5!white, colframe=orange!75!black, 
title=Environment (Step 3), boxrule=0.3mm, width=\textwidth, arc=3mm, auto outer arc=true]
You arrive at diningtable 2. On the diningtable 2, you see a bowl 1, a bread 1, a cup 3, a fork 3, a knife 1, and a soapbottle 2.
\end{tcolorbox}
}
\end{minipage}

\hfill
\begin{minipage}[H]{0.9\textwidth}
\centering
\resizebox{0.85\textwidth}{!}{
\begin{tcolorbox}[colback=gray!5!white, colframe=blue!75!black, 
title=Agent (Step 3), boxrule=0.3mm, width=\textwidth, arc=3mm, auto outer arc=true]
\switch KEEP\switchr \subgoal find a knife\subgoalr 

\action take knife 1 from diningtable 2\actionr
\end{tcolorbox}
}
\end{minipage}

\begin{minipage}[H]{0.9\textwidth}
\centering
\resizebox{0.85\textwidth}{!}{
\begin{tcolorbox}[colback=gray!5!white, colframe=orange!75!black, 
title=Environment (Step 4), boxrule=0.3mm, width=\textwidth, arc=3mm, auto outer arc=true]
You pick up the knife 1 from the diningtable 2.
\end{tcolorbox}
}
\end{minipage}

\hfill
\begin{minipage}[H]{0.9\textwidth}
\centering
\resizebox{0.85\textwidth}{!}{
\begin{tcolorbox}[colback=gray!5!white, colframe=blue!75!black, 
title=Agent (Step 4), boxrule=0.3mm, width=\textwidth, arc=3mm, auto outer arc=true]
\switch SWITCH\switchr \subgoal {\blue clean the knife 1} \subgoalr 

\action go to sinkbasin 1\actionr
\end{tcolorbox}
}
\end{minipage}

\begin{minipage}[H]{0.9\textwidth}
\centering
\resizebox{0.85\textwidth}{!}{
\begin{tcolorbox}[colback=gray!5!white, colframe=orange!75!black, 
title=Environment (Step 5), boxrule=0.3mm, width=\textwidth, arc=3mm, auto outer arc=true]
You arrive at sinkbasin 1. On the sinkbasin 1, you see a apple 1, a butterknife 2, a potato 2, and a potato 1.
\end{tcolorbox}
}
\end{minipage}

\hfill
\begin{minipage}[H]{0.9\textwidth}
\centering
\resizebox{0.85\textwidth}{!}{
\begin{tcolorbox}[colback=gray!5!white, colframe=blue!75!black, 
title=Agent (Step 5), boxrule=0.3mm, width=\textwidth, arc=3mm, auto outer arc=true]
\switch KEEP\switchr \subgoal clean the knife 1\subgoalr 

\action clean knife 1 with sinkbasin 1\actionr
\end{tcolorbox}
}
\end{minipage}

\begin{minipage}[H]{0.9\textwidth}
\centering
\resizebox{0.85\textwidth}{!}{
\begin{tcolorbox}[colback=gray!5!white, colframe=orange!75!black, 
title=Environment (Step 6), boxrule=0.3mm, width=\textwidth, arc=3mm, auto outer arc=true]
You clean the knife 1 using the sinkbasin 1.
\end{tcolorbox}
}
\end{minipage}

\hfill
\begin{minipage}[H]{0.9\textwidth}
\centering
\resizebox{0.85\textwidth}{!}{
\begin{tcolorbox}[colback=gray!5!white, colframe=blue!75!black, 
title=Agent (Step 6), boxrule=0.3mm, width=\textwidth, arc=3mm, auto outer arc=true]
\switch SWITCH\switchr \subgoal {\blue put a clean knife in sidetable}\subgoalr 

\action go to sidetable 1\actionr
\end{tcolorbox}
}
\end{minipage}

\begin{minipage}[H]{0.9\textwidth}
\centering
\resizebox{0.85\textwidth}{!}{
\begin{tcolorbox}[colback=gray!5!white, colframe=orange!75!black, 
title=Environment (Step 7), boxrule=0.3mm, width=\textwidth, arc=3mm, auto outer arc=true]
You arrive at sidetable 1. On the sidetable 1, you see a apple 2, a butterknife 1, a cup 1, a fork 1, a saltshaker 1, and a spoon 2.
\end{tcolorbox}
}
\end{minipage}

\hfill
\begin{minipage}[H]{0.9\textwidth}
\centering
\resizebox{0.85\textwidth}{!}{
\begin{tcolorbox}[colback=gray!5!white, colframe=blue!75!black, 
title=Agent (Step 7), boxrule=0.3mm, width=\textwidth, arc=3mm, auto outer arc=true]
\switch KEEP\switchr \subgoal put a clean knife in sidetable\subgoalr 

\action move knife 1 to sidetable 1\actionr
\end{tcolorbox}
}
\end{minipage}
\subsection{WebShop Trajectory}
\begin{minipage}[H]{0.9\textwidth}
\centering
\resizebox{0.85\textwidth}{!}{
\begin{tcolorbox}[colback=gray!5!white, colframe=orange!75!black, 
title=Environment (Step 1), boxrule=0.3mm, width=\textwidth, arc=3mm, auto outer arc=true]
Your task is to: Find me machine wash men's dress shirts with cotton spandex, classic fit, short sleeve with color: bright aqua, and size: x-small, and price lower than 60.00 dollars.Your current observation is: 'Search'.
\end{tcolorbox}
}
\end{minipage}

\hfill
\begin{minipage}[H]{0.9\textwidth}
\centering
\resizebox{0.85\textwidth}{!}{
\begin{tcolorbox}[colback=gray!5!white, colframe=blue!75!black, 
title=Agent (Step 1), boxrule=0.3mm, width=\textwidth, arc=3mm, auto outer arc=true]
\switch SWITCH\switchr \subgoal Find men's dress shirts with cotton spandex, classic fit, bright aqua, size x-small, and price lower than 60.00. \subgoalr \action search[bright aqua men's dress shirts size x-small cotton spandex bright aqua price less than 60]\actionr 
\end{tcolorbox}
}
\end{minipage}

\begin{minipage}[H]{0.9\textwidth}
\centering
\resizebox{0.85\textwidth}{!}{
\begin{tcolorbox}[colback=gray!5!white, colframe=orange!75!black, 
title=Environment (Step 2), boxrule=0.3mm, width=\textwidth, arc=3mm, auto outer arc=true]
'Back to Search' 

'Page 1 (Total results: 50)' 

'Next \textgreater' 

'B07HRFSNL4' 
'Nautica Men's Solid Crew Neck Short-Sleeve Pocket T-Shirt' 
'\$16.05 to \$40.98'

'B07F2G93BJ' 
'Amazon Essentials Men's Slim-fit Long-Sleeve Solid Pocket Oxford Shirt' 
'\$18.5' 

'B09QQP3356' 
'HAUKLIE Men's Sports Waffle Ribbed Polo Shirts Summer Short Sleeve Cotton Muscle Quarter-Zip Henley T-Shirt Tunics Tops' 
'\$10.99' 

'B00ED8OH2C'
'Propper Men's Pack 3 t-Shirt'
'\$13.37 to \$47.74' 

'B09KLQLLT2' 
'Long Sleeve Superhero T Shirt Tank Top Mens Compression Shirt Men Workout Fitness Gym Shirt' 
'\$19.99' 

'B09PBPZ24Z' 
'myhehthw Women's High Waisted Jeans for Women Distressed Ripped Jeans Slim Fit Butt Lifting Skinny Stretch Jeans Trousers' 
'\$22.99 to \$25.99'

'B09M63B87V' 
'YALFJV Women Long Sleeve Crew Neck Side Button T Shirts Tunic Dress Loose Asymmetric Hem Tunic Pullover to Wear with Leggings' 
'\$10.71 to \$18.34' 

'B07DKGJR74' 
'Amazon Brand - Daily Ritual Women's 100\% Cotton Oversized Fit V-Neck Pullover Sweater' 
'\$20.66 to \$29.2' 

'B099231V35' 
'Mens Casual Cargo Pants Hi Vis Viz Reflective Overalls High Visibility Safe Work Pants Outdoor Hiking Trousers Big and Tall' 
'\$16.79' 

'B07JVVDJ6L' 
'Canadian Tuxedo T Shirt - Novelty Denim Tux T-Shirt' 
'\$19.99'.
\end{tcolorbox}
}
\end{minipage}

\hfill
\begin{minipage}[H]{0.9\textwidth}
\centering
\resizebox{0.85\textwidth}{!}{
\begin{tcolorbox}[colback=gray!5!white, colframe=blue!75!black, 
title=Agent (Step 2), boxrule=0.3mm, width=\textwidth, arc=3mm, auto outer arc=true]
\switch KEEP\switchr \subgoal Find men's dress shirts with cotton spandex, classic fit, bright aqua, size x-small, and price lower than 60.00.\subgoalr \action click[b07hrfsnl4]\actionr 
\end{tcolorbox}
}
\end{minipage}

\begin{minipage}[H]{0.9\textwidth}
\centering
\resizebox{0.85\textwidth}{!}{
\begin{tcolorbox}[colback=gray!5!white, colframe=orange!75!black, 
title=Environment (Step 3), boxrule=0.3mm, width=\textwidth, arc=3mm, auto outer arc=true]
'Back to Search' 

'\textless Prev' 

'size' 
'x-small' 
'small' 
'medium'
'large'
'large tall' 
'x-large' 
'x-large tall'
'xx-large' 
'xx-large tall' 
'3x-large tall'
'4x-large tall' 
'5x-large tall'
'1x' 
'2x' 
'3x' 
'4x'
'5x' 
'6x' 

'color'
'navy' 
'noon blue' 
'grey heather'
'nautica red' 
'bright white' 
'true black'
'pale coral' 
'cradle pink' 
'bright aqua' 
'mint spring' 
'monaco blue' 
'charcoal heather (dark)' 
'barolo' 
'tidal green' 
'charcoal heather (light)' 
'estate blue'
'patina green' 
'black' 
'bright aqua blue' 
'bright cobalt' 
'coral sands' 
'deep anchor heather' 
'hawaiian ocean' 
'melon berry'
'pine forest green' 
'shipwreck burgundy'
'white' 
'zest' 
'deep atlantic' 
'delft'
'pale coral red'

'Nautica Men's Solid Crew Neck Short-Sleeve Pocket T-Shirt'
 'Price: \$16.05 to \$40.98'
 'Rating: N.A.' 
 'Description' 
 'Features' 
 'Reviews' 
 'Buy Now'
\end{tcolorbox}
}
\end{minipage}

\hfill
\begin{minipage}[H]{0.9\textwidth}
\centering
\resizebox{0.85\textwidth}{!}{
\begin{tcolorbox}[colback=gray!5!white, colframe=blue!75!black, 
title=Agent (Step 3), boxrule=0.3mm, width=\textwidth, arc=3mm, auto outer arc=true]
\switch SWITCH\switchr \subgoal Find {\blue bright aqua} men's dress shirts with cotton spandex, classic fit, size x-small, and price lower than 60.00\subgoalr \action click[{\blue bright aqua}]\actionr 
\end{tcolorbox}
}
\end{minipage}

\begin{minipage}[H]{0.9\textwidth}
\centering
\resizebox{0.85\textwidth}{!}{
\begin{tcolorbox}[colback=gray!5!white, colframe=orange!75!black, 
title=Environment (Step 4), boxrule=0.3mm, width=\textwidth, arc=3mm, auto outer arc=true]
'Back to Search' 

'\textless Prev' 

'size' 'x-small' 'small' 'medium' 'large' 'large tall' 'x-large' 'x-large tall' 'xx-large' 'xx-large tall' '3x-large tall' '4x-large tall' '5x-large tall' '1x' '2x' '3x' '4x' '5x' '6x' 

'color' 'navy' 'noon blue' 'grey heather' 'nautica red' 'bright white' 'true black' 'pale coral' 'cradle pink' 'bright aqua' 'mint spring' 'monaco blue' 'charcoal heather (dark)' 'barolo' 'tidal green' 'charcoal heather (light)' 'estate blue' 'patina green' 'black' 'bright aqua blue' 'bright cobalt' 'coral sands' 'deep anchor heather' 'hawaiian ocean' 'melon berry' 'pine forest green' 'shipwreck burgundy' 'white' 'zest' 'deep atlantic' 'delft' 'pale coral red' 

'Nautica Men's Solid Crew Neck Short-Sleeve Pocket T-Shirt' 'Price: \$16.05 to \$40.98' 'Rating: N.A.' 'Description' 'Features' 'Reviews' 'Buy Now'
\end{tcolorbox}
}
\end{minipage}

\hfill
\begin{minipage}[H]{0.9\textwidth}
\centering
\resizebox{0.85\textwidth}{!}{
\begin{tcolorbox}[colback=gray!5!white, colframe=blue!75!black, 
title=Agent (Step 4), boxrule=0.3mm, width=\textwidth, arc=3mm, auto outer arc=true]
\switch KEEP\switchr \subgoal Find bright aqua men's dress shirts with cotton spandex, classic fit, size x-small, and price lower than 60.00\subgoalr \action click[x-small]\actionr 
\end{tcolorbox}
}
\end{minipage}

\begin{minipage}[H]{0.9\textwidth}
\centering
\resizebox{0.85\textwidth}{!}{
\begin{tcolorbox}[colback=gray!5!white, colframe=orange!75!black, 
title=Environment (Step 5), boxrule=0.3mm, width=\textwidth, arc=3mm, auto outer arc=true]
'Back to Search' 

'\textless Prev' 

'size'  'x-small'  'small'  'medium'  'large'  'large tall'  'x-large'  'x-large tall'  'xx-large'  'xx-large tall'  '3x-large tall'  '4x-large tall'  '5x-large tall'  '1x'  '2x'  '3x'  '4x'  '5x'  '6x' 

'color'  'navy'  'noon blue'  'grey heather'  'nautica red'  'bright white'  'true black'  'pale coral'  'cradle pink'  'bright aqua'  'mint spring'  'monaco blue'  'charcoal heather (dark)'  'barolo'  'tidal green'  'charcoal heather (light)'  'estate blue'  'patina green'  'black'  'bright aqua blue'  'bright cobalt'  'coral sands'  'deep anchor heather'  'hawaiian ocean'  'melon berry'  'pine forest green'  'shipwreck burgundy'  'white'  'zest'  'deep atlantic'  'delft'  'pale coral red' 

'Nautica Men's Solid Crew Neck Short-Sleeve Pocket T-Shirt'  'Price: \$16.05 to \$40.98'  'Rating: N.A.'  'Description'  'Features'  'Reviews'  'Buy Now'
\end{tcolorbox}
}
\end{minipage}

\hfill
\begin{minipage}[H]{0.9\textwidth}
\centering
\resizebox{0.85\textwidth}{!}{
\begin{tcolorbox}[colback=gray!5!white, colframe=blue!75!black, 
title=Agent (Step 5), boxrule=0.3mm, width=\textwidth, arc=3mm, auto outer arc=true]
\switch KEEP\switchr \subgoal Find bright aqua men's dress shirts with cotton spandex, classic fit, size x-small, and price lower than 60.00\subgoalr \action click[buy now]\actionr 
\end{tcolorbox}
}
\end{minipage}

%%%%%%%%%%%%%%%%%%%%%%%%%%%%%%%%%%%%%%%%%%%%%%%%%%%%%%%%%%%%%%%%%%%%%%%%%%%%%%%
%%%%%%%%%%%%%%%%%%%%%%%%%%%%%%%%%%%%%%%%%%%%%%%%%%%%%%%%%%%%%%%%%%%%%%%%%%%%%%%

\end{document}